\documentclass[10pt, letterpaper, conference]{IEEEtran}
\IEEEoverridecommandlockouts
\usepackage{cite}
\usepackage{amsmath,amssymb,amsfonts}
\usepackage{graphicx}
\usepackage{textcomp}
\usepackage[table]{xcolor}
\def\BibTeX{{\rm B\kern-.05em{\sc i\kern-.025em b}\kern-.08em
    T\kern-.1667em\lower.7ex\hbox{E}\kern-.125emX}}

\usepackage{pifont}
\usepackage{tabularx}
\usepackage{amsmath, amsthm, mdframed, calc}
\usepackage{adjustbox}
\usepackage{algorithm}
\usepackage{algpseudocode}

\newtheorem{problem}{Problem}

\newtheorem{lemma}{Lemma}
\newtheorem{theorem}{Theorem}
\newtheorem{remark}{Remark}

\usepackage{subfig,float}
\usepackage{multirow}
\usepackage{makecell}
\usepackage{graphicx}
\usepackage{comment}
\usepackage{anyfontsize}
\newcommand{\tinyscript}{\fontsize{5.9}{6.7}\selectfont}
\usepackage[normalem]{ulem}
\usepackage[bookmarks=false]{hyperref}

\begin{document}

\title{Fast Convergence through Distributed Augmentation for Class-Imbalanced Federated Learning\\
\thanks{This work was supported by the research grant titled ``Robust Privacy Preserving Machine Learning Models" under the Responsible AI theme titled ``Privacy Enhancing Strategy" by INDIAai, Ministry of Electronics and Information Technology, New Delhi, India.}
}

\author{\IEEEauthorblockN{Arathi Nair M$^*$, Harshan Jagadeesh$^{*,\dagger}$ and Anwitaman Datta$^\#$}
\IEEEauthorblockA{
\textit{$^*$Amar Nath and Shashi Khosla School of Information Technology, Indian Institute of Technology Delhi, India} \\
\textit{$^\dagger$Department of Electrical Engineering, Indian Institute of Technology Delhi, India} \\
\textit{$^\#$College of Computing and Data Science, Nanyang Technological University, Singapore} \\
}
}

\maketitle
\begin{abstract}
In federated learning, mitigating class imbalance is essential to improve minority-class performance. A common approach to address this problem is to augment minority-class samples to achieve local class balance. Existing approaches treat augmentation as a heuristic and do not establish how the amount of augmentation influences the convergence of federated learning, leading to excessive augmentation and increased training time. To address this limitation, we first establish the relationship between augmentation and the convergence behavior of federated learning. Leveraging this insight, we propose DAFL, a distributed augmentation framework that determines the minimum augmentation required for each client-class pair by jointly minimizing augmentation and training time while constraining global class imbalance, thereby improving minority-class F1-score. Experimental results demonstrate that DAFL consistently improves minority-class F1-score while substantially reducing training time, particularly under severe global class imbalance and high label proportion imbalance.
\end{abstract}

\begin{IEEEkeywords}
Federated learning, resource-constrained clients, class imbalance, label proportion imbalance.
\end{IEEEkeywords}
\section{Introduction}
    With growing interest in training machine-learning models directly on edge devices, Federated Learning (FL) has become a promising paradigm for privacy-preserving training across distributed edge devices. However, the distributed nature of FL gives rise to statistical heterogeneity, where clients observe different data distributions due to variations in operating conditions and geographical locations. Consequently, the local datasets are often class imbalanced. For example, in healthcare monitoring, battery-powered smartwatches collect ECG signals from users with varying cardiac conditions, causing some disease classes to be rare or even absent on individual devices \cite{healthcareIoT}. Such class imbalance degrades the minority-class performance of the resulting global model \cite{low_minority_f1, class_imbalance_survey}. A common approach to mitigate class imbalance is to augment minority-class samples before local training. However, augmentation increases the computational burden of local training by generating and processing additional samples. Since many edge devices are battery-powered \cite{resource_constraint, resource_constraint_2}, this additional computation increases energy consumption and prolongs the training process. Therefore, it is important to design FL strategies that improve minority-class performance while minimizing overall training time.

    \subsection{Motivation}
    Existing approaches to class imbalance in FL can be broadly categorized into non-augmentation-based and augmentation-based methods. Non-augmentation-based methods improve model learning without generating additional samples. These methods explicitly address class imbalance through techniques such as class reweighting and aggregation strategies \cite{nonAug1_loss4CI,nonAug7_loss4CI3, nonAug5_aggregation4CI1, nonAug6_aggregation4CI2} or tackle more general forms of statistical heterogeneity using loss function modifications and aggregation schemes \cite{nonAug3_loss4nonIID1,nonAug4_loss4nonIID2,nonAug11_loss4nonIID3,nonAug13_loss4nonIID5,nonAug9_aggregation4nonIID1,nonAug10_aggregation4nonIID2,nonAug8_aggregation4nonIID3}. 
    While the former are primarily designed to improve minority-class performance, the latter seek to reduce the adverse effects of client drift and accelerate convergence under non-IID data distributions, thereby providing only indirect benefits to minority classes. Furthermore, since these methods do not increase the diversity of minority-class samples, their ability to improve minority-class performance is limited under severe class imbalance. Although some approaches designed for statistical heterogeneity also improve convergence speed \cite{nonAug11_loss4nonIID3}, reducing the overall training time is generally not their primary design objective.
    
    Augmentation-based techniques \cite{Aug2_GAN2, Aug5_SMOTE2, Aug7_cutmix, Aug8_randaugment, Aug10_diffusion} alleviate class imbalance by generating synthetic minority-class samples, thereby improving minority-class performance under severe imbalance. However, this improvement comes at the expense of additional computational overhead during both data generation and local model training \cite{aug_cost}. Therefore, these methods are less suitable where reducing training time is critical. Interestingly, \cite{Aug3_arpit} shows that achieving a desired global class balance is sufficient to improve the minority-class F1-score, suggesting that balancing every client's local dataset may be unnecessary. However, \cite{Aug3_arpit} primarily focuses on privacy-preserving augmentation using homomorphic encryption, introducing substantial computational overhead.
    
    Motivated by the above discussion, we investigate augmentation from a different perspective. Rather than asking how much augmentation is needed to maximize minority-class performance, we ask whether augmentation can be optimally allocated across clients to jointly improve minority-class performance and reduce the overall training time. Specifically, we seek to answer the following questions: (i) Can augmentation be designed to simultaneously improve convergence and minority-class performance in federated learning? (ii) If so, how should augmentation be distributed across clients so that the federation achieves faster convergence while accounting for a given global class balance? 
    \begin{table}[t]
    \caption{
    Comparison of existing methods with DAFL. \ding{52}, \ding{56}, and \ding{108} denote yes, no, and indirect support, respectively. Indirect: addressed by some works or achieved as a byproduct. CI: class imbalance; MS: minority samples; CD: client drift; LA: latency-aware; Limit.: severe classimbalance limitation.
    }
    \label{tab:comparison_novelty}
    \centering
    \tinyscript
    \setlength{\tabcolsep}{2pt}
    \renewcommand{\arraystretch}{1.1}
    \begin{tabular}
    {|p{1.25cm}|p{2.3cm}|c|c|c|c|p{2.8cm}|}
    \hline
    \multicolumn{2}{|l|}{\textbf{Category / References}} & \textbf{CI} & \textbf{MS} & \textbf{CD} & \textbf{LA} & \textbf{Limit.} \\
    \hline
    \multirow{2}{*}{\makecell[l]{Loss\\modification}}
    & Imbalance \cite{nonAug1_loss4CI,nonAug7_loss4CI3}
    & \ding{52} & \ding{56} & \ding{108} & \ding{56}
    & \multirow{4}{2.3cm}{\raggedright Does not increase diversity of minority-class data} \\
    \cline{2-6}
    & Non-IID \cite{nonAug3_loss4nonIID1,nonAug4_loss4nonIID2,nonAug11_loss4nonIID3,nonAug13_loss4nonIID5}
    & \ding{108} & \ding{56} & \ding{52} & \ding{56} & \\
    \cline{1-6}
    \multirow{2}{*}{\makecell[l]{Aggregation-\\based}}
    & Imbalance \cite{nonAug5_aggregation4CI1,nonAug6_aggregation4CI2}
    & \ding{52} & \ding{56} & \ding{108} & \ding{56} & \\
    \cline{2-6}
    & Non-IID \cite{nonAug9_aggregation4nonIID1,nonAug10_aggregation4nonIID2,nonAug8_aggregation4nonIID3}
    & \ding{108} & \ding{56} & \ding{52} & \ding{56} & \\
    \hline
    \multicolumn{2}{|p{3.3cm}|}{Data augmentation \cite{Aug2_GAN2,Aug5_SMOTE2,Aug7_cutmix,Aug8_randaugment,Aug10_diffusion,Aug3_arpit}}
    & \ding{52} & \ding{52} & \ding{108} & \ding{56}
    & Heuristic augmentation policies \\
    \hline
    \rowcolor[HTML]{34FF34}
    \multicolumn{2}{|l|}{\textbf{DAFL (proposed)}}
    & \ding{52} & \ding{52} & \ding{52} & \ding{52} & \\
    \hline
    \end{tabular}
    \end{table}
    
    \subsection{Contributions}
    To address the above-mentioned challenges, we establish a relationship between data augmentation and the convergence behavior of FL by analyzing the mismatch between local and global class distributions across participating clients. Our analysis shows that reducing label proportion imbalance through augmentation accelerates convergence, thereby providing a principled foundation for augmentation-aware federated learning. Guided by the theoretical analysis, we formulate distributed augmentation as a latency-aware optimization problem that determines the minimum class-specific augmentation required across participating clients to achieve a desired global class distribution while minimizing augmentation overhead. Subsequently, we present DAFL, a server-coordinated distributed augmentation framework that computes client-specific augmentation assignments. By reducing both label proportion imbalance and unnecessary augmentation, DAFL accelerates convergence while improving minority-class performance. Extensive experiments on the CIFAR-10 dataset under varying degrees of class imbalance demonstrate that DAFL consistently outperforms existing baselines by achieving higher minority-class F1-scores while requiring less augmentation and shorter training time to attain the target performance.

\section{System Model for Class-Imbalanced FL}
    We consider a $C$-class classification task for some $C\in\mathbb{N}$ in a supervised FL setup with a central server and $K$ resource-constrained clients. Each client $k \in \{1,\ldots, K\}$ has a local dataset $D_k = \bigcup_{c=1}^{C}D_k^{(c)}$, where $D_k^{\left(c\right)}$  contains samples of class $c$. We denote the number of samples at client $k$ by $n_k = \sum_c n_k^{\left(c\right)}$, where $n_k^{\left(c\right)}$ is the number of samples in class $c \in \{1,\ldots,C\}$ at client $k$. 
    Local class count, denoted by $\mathbf{n}_k$, is defined as the vector of class-wise sample counts at client $k$, such that $\mathbf{n}_k=[n_k^{(c)}]_{1\times C}\in\mathbb{Z}_{\ge0}^{1\times C}$. Collecting these vectors for all clients gives the client-class count matrix, denoted by $\mathbf{N}=[n_k^{(c)}]_{K\times C}\in\mathbb{Z}_{\ge0}^{K\times C}$. 
    Conceptually, the global dataset is the union of all local datasets and has $n = \sum_k n_k$ samples. Global class count, denoted by $\mathbf{\bar n}$, is defined as the vector of class-wise sample counts in the global dataset, such that $\mathbf{\bar n} = [n^{(c)}]_{1\times C}$, where $n^{\left(c\right)} = \sum_{k=1}^K n_k^{\left(c\right)}$ is the total number of samples belonging to class $c$. We define the fraction of samples per client as 
    \begin{equation} \label{eqn:pk}
        {\small p_k = \frac{n_k}{n}}.
    \end{equation} 
    For each client $k$, we define the local class proportion of class $c$, denoted by $\pi_k(c)$, as the fraction of samples belonging to class $c$ at client $ k$, given by
    \begin{equation} \label{eqn:pikc}
        {\small \pi_k(c)=\frac{n_{k}^{(c)}}{n_k}}.
    \end{equation}
    Then the local class distribution of client $k$ is  
    $[\pi_k{(c)}]_{1\times C} \in \mathbb{R}_{\ge 0}^{1 \times C}$.
    Similarly, for the global dataset, we define the global class proportion of class $c$, denoted by $\pi(c)$, as the fraction of samples belonging to class $c$, given by
    \begin{equation} \label{eqn:pic}
        {\small \pi(c) = {\textstyle\sum_k p_k \pi_k(c)} = \frac{n^{(c)}}{n}}.
    \end{equation}
    Then the global distribution is 
    $[\pi{(c)}]_{1\times C} \in \mathbb{R}_{\ge 0}^{1 \times C}$.
    
    Having established the data distribution across clients, we now describe the FL training setup. In this setup, clients communicate only with the central server and not with each other. The objective of FL is to learn the optimal global model parameter, $\mathbf{w}^* \in \mathbb{R}^d$, such that $d$ denotes the dimension of the model parameter vector, by minimizing the global loss function given by
    $F\!\left(\mathbf{w}\right)\! =\! \frac{1}{n} \sum_{k=1}^K \sum_{\xi \in D_k} f(\mathbf{w};\xi)$,
    where $f(\mathbf{w};\xi)$ denotes the loss of the model parameter $\mathbf{w}$ for the training sample $\xi$ from client $k$. However, $\mathbf{w}^*$ cannot be computed directly due to the decentralized nature of the data. 
    Instead, FL follows an iterative procedure to update the global model. At round $t \ge 0$, the server broadcasts the global model $\mathbf{w}^{(t)}$ to all clients. Subsequently, each client uses $\mathbf{w}^{(t)}$ to minimize $f(\mathbf{w};\xi)$, where $\xi$ denotes a training sample drawn uniformly at random from the local dataset $D_k$, using Stochastic Gradient Descent (SGD) such that $\mathbb{E}_{\xi\sim D_k} f(\mathbf{w};\xi) = F_k\left(\mathbf{w}\right)$.
    After $E \in \mathbb{N}$ local updates with a learning rate, $\eta$, the local model denoted by
        $\mathbf{w}_k^{\left(t,E\right)} = \mathbf{w}^{\left(t\right)}-\eta \mathbf{g}_k^{\left(t\right)}$ where $\mathbf{g}_k^{\left(t\right)} = \sum_{i=0}^{E-1}\mathbf{g}_k^{\left(t,i\right)}$
    is obtained where $\mathbb E[\mathbf g_k^{(t,i)} \!\mid\! \mathbf w^{(t,i-1)}]\! =\! \mathbb E[\nabla F_k(w_k^{(t,i)})\mid \mathbf w^{(t,i-1)}]$.
    Subsequently, the clients only share the local model updates with the server for aggregation to obtain the global model parameter as
    \begin{IEEEeqnarray}{lrcl} 
        &\mathbf{w}^{\left(t+1\right)} 
        &=& \textstyle\sum_{k=1}^K p_k\mathbf{w}_k^{\left(t,E\right)}
        \label{eqn:E_local_update}
        = \mathbf{w}^{\left(t\right)} - \eta \mathbf{g}^{\left(t\right)},
    \end{IEEEeqnarray}
    where
    \begin{IEEEeqnarray}{rcl}
    \label{eqn:aggregated_stochastic_gradient}
        \mathbf{g}^{\left(t\right)} &=& \textstyle\sum_{k=1}^K \textstyle\sum_{i=0}^{E-1} p_k\mathbf{g}_k^{\left(t,i\right)} \qquad \qquad
    \end{IEEEeqnarray}
    is the aggregated stochastic gradient.
    This iterative process continues until the global model reaches a stationary point. To study the factors affecting convergence and to understand the effect of imbalance, we present a formal theoretical convergence analysis in the next section.
    
\section{Convergence Analysis}
    To analyze the impact of class imbalance on convergence, we decompose the local loss function into class-wise components. Specifically, we define the local loss function conditioned on class $c$ at client $k$ as 
    \begin{IEEEeqnarray}{rcl} \label{eqn:local_class_conditional_obj}
        F_{k,c}\left(\mathbf{w}\right)
        &=&
        \mathbb{E}_{\xi\sim D_k^{(c)}}[f(\mathbf{w};\xi)] = \frac{1}{n_k^{(c)}}\sum_{\xi \in D_k^{(c)}} f(\mathbf{w};\xi).
    \end{IEEEeqnarray}
    Here, $F_{k,c}(\mathbf{w})$ is the loss contributed by samples of class $c$ at client $k$. Using \eqref{eqn:pikc}, we write the local loss function as
    \begin{IEEEeqnarray}{rcl} \label{eqn:final_local_obj}
        F_k(\mathbf{w})
        &=&
        \mathbb{E}_{\xi\sim D_k} f(\mathbf{w};\xi)
        =
        \textstyle\sum_{c=1}^{C}\pi_k(c)F_{k,c}(\mathbf{w}).
    \end{IEEEeqnarray}
    
    Similarly, we define the global class-conditional loss function for class $c$, denoted by $\bar{F}_c(\mathbf{w})$, by aggregating the corresponding class-conditional losses across clients. Since client $k$ contributes $p_k\pi_k(c)$ fraction of class $c$ samples to the global dataset, the normalized contribution of client $k$ among all class $c$ samples is given by $\frac{p_k\pi_k(c)}{\pi(c)}$. Hence,
    \begin{equation} \label{eqn:class_conditional_obj}
        \bar{F}_c(\mathbf{w})
        =
        \sum_{k=1}^{K}
        \frac{p_k\pi_k(c)}{\pi(c)}
        F_{k,c}(\mathbf{w}).
    \end{equation}
    
    \noindent Using \eqref{eqn:pic}, we express the global loss function as
    \begin{equation} \label{eqn:final_global_obj}
        F\left(\mathbf{w}\right) = \textstyle\sum_{k=1}^K p_kF_k(\mathbf{w}) = \textstyle\sum_{c=1}^C \pi\left(c\right) \bar F_{c}\left(\mathbf{w}\right).
    \end{equation} 
    
    For the convergence analysis, let $\mathcal F_{t,i}$ denote the filtration generated by all randomness up to update $i$ of round $t$. Further, using the definitions in \eqref{eqn:local_class_conditional_obj}, \eqref{eqn:final_local_obj}, \eqref{eqn:class_conditional_obj} and \eqref{eqn:final_global_obj}, we make the following assumptions, and then present our main results.
    \begin{itemize}
        \item \textbf{A1 (L-smoothness)}:
    Each $F_k(\cdot)$ is $L$-smooth, i.e., for any $\mathbf{x}$ and $\mathbf{y}$,
    $\|{\nabla F_k(\mathbf x) - \nabla F_k(\mathbf y)}\| \le L\|{\mathbf x-\mathbf y}\|$.
        
    \item\textbf{A2 (Conditional unbiased stochastic gradients)}
    For all round $t$ and update $i$, $\mathbb E[\mathbf g_k^{(t,i)}\! \mid \!\mathcal F_{t,i}]\! =\! \nabla F_k(\mathbf w_k^{(t,i)}), \forall k$.
    
    \item\textbf{A3 (Bounded class gradients)}
    There exists a constant $G>0$ such that
        $\|\nabla \bar{F}_c\left(\mathbf{w}\right)\|\le G,
        \forall c,\mathbf{w}$.
    
    \item\textbf{A4 (Bounded conditional variance and conditional independence)}
    For every client $k$, round $t$, and update $i$, $\mathbb E[\|{\mathbf g_k^{(t,i)}\! -\! \nabla\! F_k(\mathbf w_k^{(t,i)})}\|^2\! \mid \!\mathcal F_{t,i}] \!\le\! \sigma^2$ and, conditioned on $\mathcal F_{t,i}$, the random vectors $\{\mathbf g_k^{(t,i)}\!-\!\nabla F_k(\mathbf w_k^{(t,i)})\}_{k=1}^K$ are mutually independent across clients.
    \item\textbf{A5 (Bounded Conditional Gradient Heterogeneity)}
        $\sup_{\mathbf w} \|{\sum_c \pi_k(c)(\nabla F_{k,c}(\mathbf w)-\nabla\bar F_c(\mathbf w))}\|^2 \;\le\; \Gamma_k^2, \ \forall k$. 
    \end{itemize}
    \begin{lemma}
    \label{lem:gradient_decomposition}
    
    Suppose \textbf{A3} and \textbf{A5} hold. Then for every client $k$
    and every model parameter $\mathbf w$,
    \begin{IEEEeqnarray}{rcl}
        \|\nabla F_k(\mathbf w)\|^2 &\le&2\|\nabla F(\mathbf w)\|^2 + 4CG^2\delta_k^2 + 4\Gamma_k^2,
        \label{eqn:lemma_gradient_bound}
        \end{IEEEeqnarray}
    where
    \vspace{-1mm}
    \begin{IEEEeqnarray}{rcl}
                \label{eqn:delta_k2}
                    \delta_k^2 &=& \left\| \pi_k-\pi\right\|_2^2.
    \end{IEEEeqnarray}
    \end{lemma}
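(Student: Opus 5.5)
Our plan is to write $\nabla F_k(\mathbf w)$ as $\nabla F(\mathbf w)$ plus two error terms, one due to the label-proportion mismatch and one due to the within-class heterogeneity. By \eqref{eqn:final_local_obj},
\begin{IEEEeqnarray*}{rcl}
\nabla F_k(\mathbf w) &=& \textstyle\sum_c \pi_k(c)\nabla\bar F_c(\mathbf w) + \textstyle\sum_c \pi_k(c)\bigl(\nabla F_{k,c}(\mathbf w)-\nabla\bar F_c(\mathbf w)\bigr),
\end{IEEEeqnarray*}
where the second sum is exactly the quantity controlled by \textbf{A5}. Using \eqref{eqn:final_global_obj}, $\nabla F(\mathbf w)=\sum_c\pi(c)\nabla\bar F_c(\mathbf w)$, so the first sum equals $\nabla F(\mathbf w)+\sum_c(\pi_k(c)-\pi(c))\nabla\bar F_c(\mathbf w)$. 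This gives the three-term decomposition
\begin{IEEEeqnarray*}{rcl}
\nabla F_k(\mathbf w) &=& \nabla F(\mathbf w) + \mathbf a_k + \mathbf b_k,\\
\mathbf a_k &=& \textstyle\sum_c(\pi_k(c)-\pi(c))\nabla\bar F_c(\mathbf w),\\
\mathbf b_k &=& \textstyle\sum_c \pi_k(c)(\nabla F_{k,c}(\mathbf w)-\nabla\bar F_c(\mathbf w)).
\end{IEEEeqnarray*}

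Next we apply $\|\mathbf x+\mathbf y\|^2\le 2\|\mathbf x\|^2+2\|\mathbf y\|^2$ twice. The first application gives $\|\nabla F_k\|^2\le 2\|\nabla F\|^2+2\|\mathbf a_k+\mathbf b_k\|^2$, and the second gives $2\|\mathbf a_k+\mathbf b_k\|^2\le 4\|\mathbf a_k\|^2+4\|\mathbf b_k\|^2$. These produce exactly the constants $2,4,4$ in \eqref{eqn:lemma_gradient_bound}. \textbf{A5} yields $\|\mathbf b_k\|^2\le\Gamma_k^2$ directly.

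The only remaining step, and the one needing some care, is to bound $\|\mathbf a_k\|^2$ by $CG^2\delta_k^2$. We will use the triangle inequality followed by Cauchy--Schwarz over the $C$ classes:
\begin{IEEEeqnarray*}{rcl}
\|\mathbf a_k\|^2 &\le& \Bigl(\textstyle\sum_c|\pi_k(c)-\pi(c)|\,\|\nabla\bar F_c(\mathbf w)\|\Bigr)^2 \le \Bigl(\textstyle\sum_c(\pi_k(c)-\pi(c))^2\Bigr)\Bigl(\textstyle\sum_c\|\nabla\bar F_c(\mathbf w)\|^2\Bigr) \le \delta_k^2\, C G^2,
\end{IEEEeqnarray*}
where the last step uses \textbf{A3} and the definition \eqref{eqn:delta_k2}. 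Combining the three bounds gives \eqref{eqn:lemma_gradient_bound}.

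One detail must be checked for well-definedness. When $n_k^{(c)}=0$, the term $F_{k,c}$ appears with weight $\pi_k(c)=0$, so it can be set arbitrarily without affecting the identities. Likewise, $\bar F_c$ is only needed for classes with $\pi(c)>0$.
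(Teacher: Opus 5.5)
Your proposal is correct and follows the paper's proof essentially step for step: the same split of $\nabla F_k(\mathbf w)-\nabla F(\mathbf w)$ into the label-proportion term and the conditional-shift term, two applications of $\|\mathbf x+\mathbf y\|^2\le 2\|\mathbf x\|^2+2\|\mathbf y\|^2$, Cauchy--Schwarz with \textbf{A3} for the first term, and \textbf{A5} for the second. Your remark on well-definedness for classes with no samples is a small addition that the paper does not make.
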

    \begin{proof}
        See Appendix \ref{proof:gradient_decomposition} for the proof.
    \end{proof}
    Lemma~\ref{lem:gradient_decomposition} decomposes the discrepancy between local and global gradients into two components: (i) client-level label proportion imbalance, quantified by $\delta_k^2$, and (ii) client-level conditional gradient shift, quantified by $\Gamma_k^2$. Substituting this decomposition into the standard FedAvg descent analysis yields the following convergence bound.
    \begin{theorem} 
    \label{thm:convergence}
        Suppose \textbf{A1}--\textbf{A5} hold. If $T \ge 100L^2E$, where $T$ denotes the total number of communication rounds and $\eta = \frac {1} {\sqrt{ET}}$, then FedAvg 
        in \eqref{eqn:aggregated_stochastic_gradient} 
        with full participation, fixed data size and $E$ local updates satisfies
         \\
        \begin{small}
            \begin{IEEEeqnarray}{lrcl}
                \frac{1}{T}\! \textstyle\sum_{t=0}^{T-1}\!\mathbb{E} &[ \|\nabla F(&\mathbf{w}^t&)\|^2 ] \;=\; O\!\left(\frac{F(\mathbf{w}^0)-F(\mathbf w^*)}{\sqrt{ET}}\right)\! +\! O\!\left(\frac{L\sigma_g^2}{\sqrt{ET}}\right) \nonumber\\ 
                &&+& O\!\left(\frac{\left(E-1)^2L^2(CG^2\mathcal I+\Gamma+\sigma^2\right)}{T}\right),
                \label{eqn:thm_convergence_rate}
                \end{IEEEeqnarray}
        \end{small}
                where,
                \begin{small}
                \begin{IEEEeqnarray}{rcl}
                \label{eqn:sigma_g2}
                    \sigma_g^2 &=& \sigma^2\textstyle\sum_k p_k^2,\\
                \label{eqn:label_imbalance_measure}
                    \mathcal{I}&=&\textstyle\sum_k p_k\delta_k^2,\\
                \label{eqn:conditional_heterogeneity_measure}
                    \Gamma&=&\textstyle\sum_k p_k\Gamma_k^2.
            \end{IEEEeqnarray}
                \end{small}
        
    \end{theorem}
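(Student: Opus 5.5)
We follow the standard nonconvex FedAvg descent analysis. The only new ingredient is the client-drift bound, which we control through Lemma~\ref{lem:gradient_decomposition}.

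\emph{One-round descent.} Since $F=\sum_k p_k F_k$ and each $F_k$ is $L$-smooth by \textbf{A1}, $F$ is $L$-smooth. Applying smoothness to $\mathbf w^{(t+1)}=\mathbf w^{(t)}-\eta\mathbf g^{(t)}$ gives
\begin{equation*}
\mathbb E F(\mathbf w^{(t+1)}) \le \mathbb E F(\mathbf w^{(t)}) - \eta\,\mathbb E\langle \nabla F(\mathbf w^{(t)}),\mathbf g^{(t)}\rangle + \tfrac{L\eta^2}{2}\mathbb E\|\mathbf g^{(t)}\|^2 .
\end{equation*}
By \textbf{A2}, the conditional mean of $\mathbf g^{(t)}$ is $\bar{\mathbf h}^{(t)}:=\sum_k\sum_i p_k\nabla F_k(\mathbf w_k^{(t,i)})$. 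For the inner product we write $2\langle a,b\rangle=\|a\|^2+\|b\|^2-\|a-b\|^2$ with $a=E\nabla F(\mathbf w^{(t)})$ and $b=\bar{\mathbf h}^{(t)}$. This produces a descent term $-\tfrac{\eta E}{2}\|\nabla F(\mathbf w^{(t)})\|^2$ and a drift term. By Jensen and \textbf{A1}, the drift term is at most $\tfrac{\eta L^2}{2}\sum_k p_k\sum_i\|\mathbf w_k^{(t,i)}-\mathbf w^{(t)}\|^2$.

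For the second-moment term, we split $\mathbf g^{(t)}$ into its mean and a noise part. The conditional independence across clients in \textbf{A4} makes the cross-client noise terms vanish. Noise at different local steps is a martingale difference sequence, so those cross terms vanish as well. This gives $\mathbb E\|\mathbf g^{(t)}\|^2\le 2E\sigma^2\sum_k p_k^2 + 2\|\bar{\mathbf h}^{(t)}\|^2$, which is where $\sigma_g^2$ comes from. For small $\eta$, the term $\|\bar{\mathbf h}^{(t)}\|^2$ is absorbed by the $-\tfrac{\eta}{2}\|b\|^2$ part of the inner-product identity.

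\emph{Client drift (main obstacle).} We bound $D_t:=\sum_k p_k\sum_{i<E}\mathbb E\|\mathbf w_k^{(t,i)}-\mathbf w^{(t)}\|^2$. Writing $\mathbf w_k^{(t,i)}-\mathbf w^{(t)}=-\eta\sum_{j<i}\mathbf g_k^{(t,j)}$, we separate the noise, which contributes $\le \eta^2 i\sigma^2$, from the gradients. For the gradient part we insert $\nabla F_k(\mathbf w^{(t)})$ and use \textbf{A1} to get the recursion
\begin{equation*}
\mathbb E\|\mathbf w_k^{(t,i)}-\mathbf w^{(t)}\|^2 \le 2\eta^2 i\sigma^2 + 4\eta^2 i\sum_{j<i}\bigl(L^2\mathbb E\|\mathbf w_k^{(t,j)}-\mathbf w^{(t)}\|^2+\|\nabla F_k(\mathbf w^{(t)})\|^2\bigr).
\end{equation*}
Since $T\ge 100L^2E$ and $\eta=1/\sqrt{ET}$, we have $\eta^2L^2E^2\le E/(100T)\cdot\ldots$. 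In particular $4\eta^2L^2E(E-1)\le 1/25$ in the regime of interest, so the $L^2$ term can be absorbed. Note, however, that $\eta^2L^2E^2=LE/T\cdot L\le E\cdot\ldots$; the condition guarantees $\eta L\sqrt E\le 1/10$, hence $\eta LE\le \sqrt E/10$. For this reason we will track the constants carefully, and if needed bound $\eta L(E-1)$ via the stated condition. Summing over $i$ then gives
\begin{equation*}
D_t\lesssim \eta^2E(E-1)\Bigl(\sigma^2+E\textstyle\sum_k p_k\|\nabla F_k(\mathbf w^{(t)})\|^2\Bigr).
\end{equation*}
We then apply Lemma~\ref{lem:gradient_decomposition} inside the sum. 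Averaging its right-hand side with weights $p_k$ gives
\begin{equation*}
\textstyle\sum_k p_k\|\nabla F_k\|^2\le 2\|\nabla F\|^2+4CG^2\mathcal I+4\Gamma ,
\end{equation*}
with $\mathcal I$ and $\Gamma$ as defined in the statement.

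\emph{Telescoping.} Substituting the drift bound into the descent inequality, the $\|\nabla F\|^2$ contribution from the drift carries the coefficient $\eta^3L^2E^2(E-1)$. The step-size condition makes this at most a small fraction of $\eta E$, so it is absorbed into the descent term. Summing over $t=0,\dots,T-1$, using $F(\mathbf w^{(T)})\ge F(\mathbf w^*)$, and dividing by $\eta ET/4$ gives three contributions:
\begin{itemize}
\item the initial gap $O\bigl((F(\mathbf w^0)-F(\mathbf w^*))/(\eta ET)\bigr)=O\bigl((F(\mathbf w^0)-F(\mathbf w^*))/\sqrt{ET}\bigr)$;
\item the noise term $O(L\eta\sigma_g^2)=O\bigl(L\sigma_g^2/\sqrt{ET}\bigr)$;
\item the drift term $O\bigl(\eta^2L^2(E-1)^2(CG^2\mathcal I+\Gamma+\sigma^2)\bigr)$.
\end{itemize}
With $\eta^2=1/(ET)$, and using $E(E-1)/E\le (E-1)\cdot\ldots$ (bounding loosely by $(E-1)^2$ up to constants when $E\ge2$; the term vanishes for $E=1$), the drift term becomes $O\bigl((E-1)^2L^2(\cdot)/T\bigr)$, matching \eqref{eqn:thm_convergence_rate}. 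The delicate points are the absorption constants in the drift recursion and getting exactly the $(E-1)^2/T$ scaling. We expect these to be the main work.
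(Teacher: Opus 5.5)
Your proof is correct once one piece of arithmetic is fixed. It also handles the key inner-product step differently from the paper.

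\textbf{The fix.} With $\eta=1/\sqrt{ET}$ you have $\eta LE=L\sqrt{E/T}$, so $T\ge 100L^2E$ gives exactly $\eta LE\le 1/10$. This is stronger than your $\eta LE\le\sqrt E/10$, and it is the condition the paper uses. You genuinely need it. Since $a=E\nabla F(\mathbf w^{(t)})$, the polarization identity produces $-\frac{\eta}{2E}\|\bar{\mathbf h}^{(t)}\|^2$, not $-\frac{\eta}{2}\|\bar{\mathbf h}^{(t)}\|^2$. Cancelling the $L\eta^2\|\bar{\mathbf h}^{(t)}\|^2$ from the second moment therefore requires $\eta LE\le\tfrac12$, which your weaker bound does not give once $E>25$.

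With $\eta LE\le 1/10$, three steps go through with no case analysis:
\begin{itemize}
\item the recursion absorption (your $4\eta^2L^2E(E-1)\le 1/25$ is right);
\item the cancellation of $\|\bar{\mathbf h}^{(t)}\|^2$;
\item absorbing the drift's $O(\eta^3L^2E^2(E-1))$ coefficient on $\|\nabla F(\mathbf w^{(t)})\|^2$ into $\eta E$.
\end{itemize}
Your third bullet also misstates what you derived. The drift actually contributes $O\bigl(L^2(E-1)(CG^2\mathcal I+\Gamma)/T+L^2(E-1)\sigma^2/(ET)\bigr)$. This is sharper than the theorem's $(E-1)^2/T$ term and implies it.

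\textbf{Comparison with the paper.} The paper writes $\mathbb E_t[\mathbf g^{(t)}]=E\nabla F(\mathbf w^{(t)})+\mathbb E_t[\mathbf z^{(t)}]$ and measures drift in gradient space via $\mathbf z^{(t)}$, bounded in \eqref{eq:cor1}. It handles the cross term by the Young split $\frac{\eta}{8}\|\nabla F\|^2+2\eta\|\mathbb E_t\mathbf z^{(t)}\|^2$. It bounds the second moment by $3E^2\|\nabla F\|^2+3\|\mathbf z^{(t)}\|^2+3\|\mathbf n^{(t)}\|^2$, so drift enters twice: the $A$ and $B$ terms of \eqref{eq:step6-e}.

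You instead use the polarization identity, cancel the mean part of the second moment, and bound drift in iterate space through $D_t$. Lemma~\ref{lem:gradient_decomposition} enters identically in both, averaged with weights $p_k$ to produce $\mathcal I$ and $\Gamma$.

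Your route buys two things.
\begin{itemize}
\item A cleaner second moment: only noise survives.
\item Absorption that is uniform in $E$: the drift-induced coefficient on $\|\nabla F\|^2$ is at most a constant times $\eta^2L^2E(E-1)\le 10^{-2}$ of $\eta E$.
\end{itemize}
In the paper's unscaled split, $16A=16L^2\eta^3E^2(E-1)^2$ is only bounded by $0.16\,\eta(E-1)^2$ under $\eta LE\le 1/10$, which grows relative to $\eta E$. So the passage from \eqref{eq:step6-e} to \eqref{eq:descent-final} as written needs small $E$. At $\eta LE=1/10$ the bracket already falls below $\eta E/2$ when $E=4$. Rescaling the Young split to $\frac{\eta E}{8}$ and $\frac{2\eta}{E}$ would fix this. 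In return, the paper's bookkeeping lands directly on the stated $(E-1)^2/T$ form with explicit constants.
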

    \begin{proof}
        See Appendix \ref{proof:thm_convergence} for a sketch of the proof. Detailed proof of this result is omitted due to lack of space.
    \end{proof}

    \begin{remark}
    Existing analyses, such as \cite{nonAug11_loss4nonIID3, nonAug13_loss4nonIID5}, model data heterogeneity using a single bounded-dissimilarity constant. In contrast, we decompose heterogeneity into a label proportion imbalance term ($\mathcal I$) and a conditional gradient heterogeneity term ($\Gamma$), thereby explicitly quantifying the contribution of each term to the convergence bound. Since $\mathcal I$ depends only on class proportions, it can be directly reduced by augmenting under-represented classes locally, whereas $\Gamma$ cannot.
    \end{remark}
    
    In Theorem \ref{thm:convergence}, the terms involving $\mathcal I$ and $\Gamma$ characterize the impact of label proportion imbalance and conditional gradient heterogeneity, respectively, on the convergence time.
    Hence, we identify label proportion imbalance and conditional gradient heterogeneity as two sources of client drift in FL. We note that both label proportion imbalance and conditional gradient heterogeneity depend on the local and global distributions. However, unlike label proportion imbalance, conditional gradient heterogeneity is influenced by the quality of the augmented samples, which depends on factors such as the augmentation technique used. 
    Consequently, it is challenging to bring the conditional gradient heterogeneity term as a parameter for optimization  via the augmentation framework. This motivates us to focus on label proportion imbalance, which forms the basis of the optimization problem developed subsequently. 
    
    \subsection{Label Proportion Imbalance Measure, $\mathcal{I}$}
    From Theorem \ref{thm:convergence}, it is clear that reducing $\mathcal{I}$ accelerates convergence by mitigating client drift. In particular, we can force $\mathcal{I}\!=\!0$ by ensuring that every client has the global class proportion, i.e., $\pi_k\!=\!\pi,\forall k$. 
    
    \subsection{Global Class Imbalance Measure, $J$}
    When class balance holds, the proportion of each class in a dataset follows a uniform distribution.
    Hence, local class imbalance exists when $\pi_k(c) \not= \frac{1}{C}$, for any $k,c$, and global class imbalance exists when $\pi(c) \not= \frac{1}{C}$, for any $c$. To quantify class imbalance, we use a quadratic penalty to measure how far the distribution deviates from a uniform distribution. In particular, we quantify local class imbalance metric, and global class imbalance metric, denoted by $J_k(\mathbf N)$ and $J(\mathbf N)$, as
        \[J_k(\mathbf N)\!=\! \textstyle\sum_{c=1}^{C} \left( \pi_k(c)\! -\! \frac{1}{C} \right)^2 \text{and } J(\mathbf N) \!=\! \textstyle\sum_{c=1}^{C} \left(\pi(c)\! -\! \frac{1}{C} \right)^2,\]
    \noindent where $\pi_k(c)$ and $\pi(c)$ are computed from $\mathbf N$ using \eqref{eqn:pikc} and \eqref{eqn:pic} respectively. Note that non-uniform class distribution results in $J>0$ and may lead to poor performance of minority classes. 
    
    We note that $\mathcal{I}$ and $J$ are different properties. $\mathcal{I}\!=\!0$ requires $\pi_k\!=\! \pi$ while $J=0$ requires uniform class distribution. Consequently, $\mathcal{I}=0 \nLeftrightarrow J=0$. One potential solution to mitigate both $\mathcal{I}$ and $J$ is to employ data augmentation to balance the local dataset of each client. This yields both $J=0$ and $\mathcal{I} = 0$. However, locally balancing the datasets increases training and augmentation time, making it infeasible for every client to do so. Consequently, clients should augment only a limited number of samples to jointly reduce augmentation and training times while improving minority-class performance. 
    
\section{DAFL: Distributed Augmentation for Federated Learning}
    Applying augmentation to align the local class distribution with the global class distribution reduces $\mathcal{I}$, whereas aligning the global class distribution with the uniform distribution reduces $J$. Since different client-class pairs $(k,c)$ require different amounts of augmentation, the augmentation decision must be made separately for each client and class. Accordingly, let $a_k^{(c)}$ denote the number of samples augmented for class $c$ at client $k$, such that the augmentation matrix, denoted using $\mathbf{A}$, is given by $\mathbf{A} = [a_k^{(c)}]_{K\times C}$. Subsequently, the total number of samples augmented by client $k$, denoted using $a_k$, is given by $a_k = \sum_{c=1}^{C} a_{k}^{(c)}$. Using $\mathbf{A}$, the updated client-class count matrix after augmentation, denoted by $\mathbf{N'}$, is given by $\mathbf{N'} = \mathbf{N}+\mathbf{A} = [n_k^{(c)} + a_k^{(c)}]_{K\times C}$. Consequently, $\mathbf{A}$ affects both the local and global class distributions, thereby influencing $\mathcal{I}$ and $J$.
    Among all feasible augmentation matrices, our objective is to identify one that minimizes both augmentation and training time. To characterize each client's computational capability, we introduce two delay factors that serve as relative per-sample time proxies in the optimization objective. Let $\mathbf{\zeta}$ represent the relative time cost of each client to augment one sample. Similarly, let $\mathbf{\zeta}'$ denote the relative time cost of each client to train on one sample. 
    
    \subsection{Problem Formulation}
    To formulate the optimization problem, we quantify the  augmentation time and convergence time in the presence of label proportion imbalance. We define the estimated augmentation time, denoted by $T_{a}$, as the time cost associated with the clients to complete augmentation and is given by,
        \begin{equation} \label{eqn:Taug}
        T_{a} = \zeta \max_k\textstyle\sum_c a_{k}^{(c)}.
        \end{equation}
    Since our objective is to mitigate the effect of label proportion imbalance, we isolate the component of the convergence bound that depends on $\mathcal I$. Bounding this component by $\epsilon \ll1$ gives, 
           $\frac{\left( (E-1)^2L^2CG^2\mathcal{I} \right)}{{T}}  < \epsilon$, as a condition to nullify the effect of label proportion imbalance on the convergence time. 
       As a consequence, we approximate the required communication rounds as
            $T \approx \frac{\left( 1+(E-1)^2L^2CG^2\mathcal{I} \right)}{{\epsilon}}$, where $T$ serves as a proxy for the number of communication rounds rather than the actual number of rounds required for convergence. This proxy
            captures both estimated convergence time when $\mathcal{I} = 0$ and the additional estimated time incurred due to $\mathcal{I} > 0$. 
        We define $T_{r} = \zeta' \max\limits_k (n_k+a_k)$, as the estimated time incurred by the client with the most samples to complete one training round. Then the total estimated training time, denoted by $T_{t}$, is 
        \begin{equation} \label{eqn:Ttrain}
                T_{t} =T T_{r} = \frac{\left( 1+(E-1)^2L^2CG^2\mathcal{I} \right)}{{\epsilon}}  \zeta' \max_k(n_k+a_k).
        \end{equation}

    The above definitions lead to a tradeoff between $T_a$ and $T_t$. While increasing the number of augmented samples may reduce $T_t$ by lowering $\mathcal{I}$, it also increases the augmentation time $T_a$. To capture this tradeoff, we use $\lambda_1$ and $\lambda_2$ to control the importance of $T_{t}$ and $T_{a}$, respectively. In this direction, we present the optimization problem for a given $\mathbf{N}$ as
       \begin{problem} \label{prob:opt_prob}
        Solve
            \begin{equation*}
                \begin{aligned}
                    \mathbf{A}^* = \mathop{\arg\min}_{\mathbf{A}=\left[a_k^{(c)}\right]_{K\times C}} \quad & 
                    \lambda_1 T_{t} + 
                    \lambda_2 T_{a} \\
                    \text{s.t.} \quad & J(\mathbf{N}+\mathbf{A}^*) \le \tau, \quad a_k^{(c)} \in \mathbb{Z}_{\ge 0}, \forall k,c.
                \end{aligned}
            \end{equation*}
        \end{problem}
    \noindent where $\tau$ is the maximum tolerable class imbalance after augmentation. Since $\tau\!\approx\!0$ implies a nearly uniform global distribution, achieving it requires more augmented samples than a larger value of $\tau$. The choice of $\lambda_1$ and $\lambda_2$ reflects the tradeoff between training and augmentation time. When augmentation overhead is a major concern due to limited computational capability, a larger $\lambda_2$ can be selected to prioritize lower augmentation time. Additionally, since $T_{t}\! \gg\! T_{a}$, as $\epsilon\!\ll\!1$, the effect of augmentation is reflected in the objective only when $\lambda_2\!\gg\!\lambda_1$. Finally, DAFL optimizes the amount of augmentation required to improve minority-class performance while minimizing overall time, rather than the choice of augmentation method. Consequently, it is compatible with any class-preserving augmentation technique.
    \subsection{Proposed Algorithm for DAFL}\label{sec:propmeth}
    To realize $\mathbf{A}^*$, the server requires knowledge of  $\mathbf{n}_k$, which is distributed across the clients in the federation. We refer to the resulting collaborative process as distributed augmentation, in which the clients in a federation collaborate with the server to perform sufficient augmentation. Here, the clients first share their local class counts with the server to obtain $\mathbf{N}$. Given $\mathbf{N}$, the server then solves Problem \ref{prob:opt_prob} to obtain $\mathbf{A}^*$. 
    Problem \ref{prob:opt_prob} is a non-convex integer optimization problem. The integer constraints and the max operators in \eqref{eqn:Taug} and \eqref{eqn:Ttrain} make it a combinatorial search problem whose complexity grows exponentially with $K$ and $C$. Hence, we propose a greedy algorithm, polynomial in $K$ and $C$ for a fixed number of iterations, to obtain a suboptimal solution, $\hat{\mathbf{A}}$, as shown in Algorithm~\ref{alg:greedy_aug}. Distributed augmentation is performed in two phases with an augmentation step size $s$. The first phase enforces global balance, whereas the second phase performs time-aware optimization by greedily selecting, at each iteration, the client-class pair to augment by $s$ samples. A large $s$ reduces search complexity at the cost of a coarser augmentation allocation, while a small $s$ results in finer allocation at the expense of increased search complexity. In particular, since
    $\hat{\mathbf{A}}$ is built by only adding samples at each step, Phase 1 may overshoot $\tau$ when enforcing $J(\mathbf{N}+\hat{\mathbf{A}})$, and this excess cannot be corrected in Phase 2. Consequently, $\hat{\mathbf{A}}$ may include more augmented samples than required to satisfy the constraint. Once $\hat{\mathbf{A}}$ is obtained, the server returns the $k$-th row of $\hat{\mathbf{A}}$ denoted by $\hat{\mathbf{A}}_k$, to client $k$. Each client $k$ then augments its local dataset according to $\hat{\mathbf{A}}_k$. Subsequently, FL is performed on a more balanced dataset with minimal increase in training and augmentation time compared to local balancing. 
    
    \noindent\textbf{Computational Complexity:} Evaluating $J$ and $P=\lambda_1T_t+\lambda_2T_a$ requires $O(K\!C)$ time. In addition, $J$ or $P$ are computed $(KC)$ times in both phases. This results in an overall complexity of $O(KC)KC = O(K^2C^2)$ per iteration. If phases 1 and 2 require $t_1$ and $t_2$ iterations, respectively, the total complexity of DAFL is $O(t_1K^2C^2\!+\!t_2K^2C^2)\!=\!O(tK^2C^2)$,
    where $t\!=\!t_1\!+\!t_2$. As each iteration augments $s$ samples, augmenting $|\hat{\mathbf A}|$ samples requires $\frac{|\hat{\mathbf A}|}{s}$ iterations, yielding a complexity of $O(\frac{|\hat{\mathbf A}|}{s}K^2C^2)$. 

\begin{algorithm}[t]
    \caption{Greedy Approach to find $\hat{\mathbf{A}}$}
    \label{alg:greedy_aug}
    \textbf{Input:} Local class counts $\mathbf{N}_k$, threshold $\tau$, step size $s$\\
    \textbf{Output:} Augmentation matrix $\hat{\mathbf{A}} \in \mathbb{Z}_{\ge 0}^{K \times C}$
    \begin{algorithmic}[1]
    
        \State $\hat{\mathbf{A}} \leftarrow \mathbf{0}^{K \times C}$, $\mathbf{N} \leftarrow [\mathbf{N}_k]$
        
        \textbf{Phase 1: Global Balance Enforcement}
        \While{$J(\mathbf{N}+\hat{\mathbf{A}}) > \tau$}
            \State Select $(k^*,c^*)$ with the largest reduction in J on adding $s$ samples; break ties by minimizing $P = \lambda_1T_{t}+\lambda_2T_{a}$
            \State $\hat{\mathbf{A}}_{k^*,c^*} \leftarrow \hat{\mathbf{A}}_{k^*,c^*}+s$
        \EndWhile
        
        \textbf{Phase 2: Time-Aware Optimization}
        \While{feasible improvement exists}
            \State Select feasible $(k^*,c^*)$ that gives the maximum reduction in $P$ on adding $s$ samples
            \State $\hat{\mathbf{A}}_{k^*,c^*}\leftarrow \hat{\mathbf{A}}_{k^*,c^*}+s$
        \EndWhile
        
        \State \Return $\hat{\mathbf{A}}$
    \end{algorithmic}
\end{algorithm}

\begin{table}
    \caption{Dataset details for D1–D6, including client-class counts (C1–C8 denote Class 1–Class 8) and the corresponding $\mathcal{I}$ and $J$ values. D1–D3 isolate extreme $\mathcal{I}$/$J$ settings while D4–D6 cover intermediate settings. Here, the descriptors low/moderate/high are relative rather than specific thresholds.
     Also, $J \in \left[0, \frac{C-1}{C}\right]$.
    }
    \label{tab:dataset details}
    \centering
    \tinyscript
    \setlength{\tabcolsep}{1pt}
    \renewcommand{\arraystretch}{1.06}
    \begin{tabular}{|c|cccccccc|cccccccc|}
    \hline
    &\multicolumn{8}{c|}{D1} &\multicolumn{8}{c|}{D2}
    \\[-1pt]
    \hline
    $\mathcal{I}$/$J$
    &\multicolumn{8}{c|}{0.0002 / 0.4097} 
    &\multicolumn{8}{c|}{0.3982 / 0.4097}
    \\[-1pt]
    \hline
    Desc.
    &\multicolumn{8}{c|}{Low $\mathcal{I}$, High $J$} 
    &\multicolumn{8}{c|}{High $\mathcal{I}$, High $J$}\\[-1pt]
    \hline
    & C1&C2&C3&C4&C5&C6&C7&C8
    & C1&C2&C3&C4&C5&C6&C7&C8
    \\[-1pt]
    \hline
    Client 1 & 
        52&18&61&2&30&612&70&5  & 
        1&126&1&1&1&1&485&1
        \\[-1pt]
    Client 2 & 
        63&21&74&3&28&701&65&6  & 
        1&1&1&10&248&1&1&1
        \\[-1pt]
    Client 3 &      
        48&20&69&4&33&5800&60&4 &  
        395&2&1&1&1&1&1&1
        \\[-1pt]
    Client 4 &
        55&17&66&2&29&640&72&5  &
        7&22&482&1&1&1&2&33
        \\[-1pt]
    Client 5 & 
        50&22&58&3&35&590&68&6  &
        1&1&5&1&1&2&39&1
        \\[-1pt]
    Client 6 & 
        46&19&63&2&27&605&66&5  &
        1&1&1&1&1&14&1&1
        \\[-1pt]
    Client 7 &
        49&23&57&2&36&598&64&5  &
        1&1&10&4&1&4882&1&1
        \\[-1pt]
    Client 8 & 
        45&15&54&2&37&578&66&5  &
        1&1&1&1&1&2&1&2
        \\[-1pt]
    \hline
    Global & 
        408&155&502&20&255&4904&531&41 &
        408&155&502&20&255&4904&531&41
        \\[-1pt]
    \hline
    
    \hline
    &\multicolumn{8}{c|}{D3} 
    &\multicolumn{8}{c|}{D4}\\[-1pt]
    \hline
    $\mathcal{I}$/$J$
    &\multicolumn{8}{c|}{0.3494 / 0.001}
    &\multicolumn{8}{c|}{0.2187 / 0.4037} 
    \\[-1pt]
    \hline
    Desc.
    &\multicolumn{8}{c|}{High $\mathcal{I}$, Low $J$}
    &\multicolumn{8}{c|}{Moderate $\mathcal{I}$, High $J$} 
    \\[-1pt]
    \hline
    & C1&C2&C3&C4&C5&C6&C7&C8
    & C1&C2&C3&C4&C5&C6&C7&C8 \\[-1pt]
    \hline
    Client 1 & 
        1&1&1&3901&1288&1&1&2 &
        72&8&21&2&1&2&10&5 
        \\[-1pt]
    Client 2 & 
        3652&1&1&1&1&3&1&1 &
        13&1&134&3&219&1&265&6
        \\[-1pt]
    Client 3 &      
        1&1&1&2&1&2&1&1 &
        18&1&9&4&3&1680&60&4
        \\[-1pt]
    Client 4 &
        1&3664&1&1&60&4035&1&1 &
        105&7&86&2&7&40&22&5
        \\[-1pt]
    Client 5 & 
        1&1&4537&1&2959&1&4243&4144 &
        20&122&18&2&7&40&22&5 
        \\[-1pt]
    Client 6 & 
        2&1&2&1&1&1&1&1 &
        26&8&63&2&7&5&26&5 
        \\[-1pt]
    Client 7 &
        1&1&1&1&1&1&1&1 &
        9&3&17&2&6&2898&14&5 
        \\[-1pt]
    Client 8 & 
        1341&1&1&1&7&1&1&1 &
        145&5&154&2&7&78&66&5 
        \\[-1pt]
    \hline
    Global & 
        5000&3671&4545&3909&4318&4045&4250&4152 &
        408&155&502&20&255&4794&531&41 
        \\[-1pt]
    \hline
    
    \hline
    &\multicolumn{8}{c|}{D5} &\multicolumn{8}{c|}{D6}
    \\[-1pt]
    \hline
    $\mathcal{I}$/$J$
    &\multicolumn{8}{c|}{0.2989 / 0.3657}
    &\multicolumn{8}{c|}{0.5241 / 0.1309}\\[-1pt]
    \hline
    Desc.
    &\multicolumn{8}{c|}{High $\mathcal{I}$, High $J$}
    &\multicolumn{8}{c|}{High $\mathcal{I}$, Moderate $J$}\\[-1pt]
    \hline
    & C1&C2&C3&C4&C5&C6&C7&C8
    & C1&C2&C3&C4&C5&C6&C7&C8
    \\[-1pt]
    \hline
    Client 1 &  
        112&8&21&1&1&2&10&5 & 
        1&27&36&13&36&18&41&2581 \\[-1pt]
    Client 2 & 
        3&1&4&3&219&1&375&6 &
        2&47&2&8&14&60&26&1650 \\[-1pt]
    Client 3 &      
        8&1&9&4&3&80&10&2 &
        3&30&22&9&19&12&3&23 \\[-1pt]
    Client 4 &
        245&7&36&2&7&40&22&3 &
        6&5&92&41&4&21&14&4 \\[-1pt]
    Client 5 & 
        20&122&18&6&5&90&18&20 &
        9&109&90&573&78&945&52&73 \\[-1pt]
    Client 6 & 
        6&8&63&2&7&5&26&3 &
        8&47&2212&14&45&4&14&40 \\[-1pt]
    Client 7 &
        9&3&17&2&6&3898&14&3 &
        10&42&2508&38&76&32&5&97 \\[-1pt]
    Client 8 & 
        5&5&334&2&7&78&66&3 &
        1&2693&38&4&8&8&25&32 \\[-1pt]
    \hline
    Global & 
        408&155&502&22&255&4194&541&45 &
        40&3000&5000&700&280&1100&180&4500\\[-1pt]
    \hline
    \end{tabular}
\end{table}

\section{Dataset Details}
    We evaluate DAFL on an 8-class FL classification task using the CIFAR-10 dataset. We use the first eight classes, with full client participation. 
    To evaluate the robustness of DAFL, we construct six dataset configurations (D1–D6) covering different combinations of $\mathcal{I}$ and $J$, as tabulated in Table~\ref{tab:dataset details}. D1–D3 represent extreme imbalance settings, whereas D4–D6 correspond to moderate imbalance. The training set is partitioned across eight clients according to these imbalance settings, while the test set remains balanced to ensure that the results are not biased towards the majority classes. Images are normalized using a channel-wise mean and standard deviation of 0.5.

\begin{table*}[t]
\caption{Updated $\mathcal{I}$, $J$, number of augmented samples, and time associated with augmentation for LB and DAFL $(\lambda_2 = 10^q)$.}
\label{tab:updated_dataset_details}
\centering
\tinyscript
\setlength{\tabcolsep}{4pt}
\begin{tabular}{|c|c|ccc|c|ccc|c|ccc|c|ccc|ccc|}
\hline
\multirow{2}{*}{Dataset}
& \multicolumn{4}{c|}{$\mathcal{I}$}
& \multicolumn{4}{c|}{$J$}
& \multicolumn{4}{c|}{Added Samples}
& \multicolumn{4}{c|}{Time for Augmentation (s)} 
& \multicolumn{3}{c|}{Time at Server to find $\hat{\mathbf{A}}$ (s)} \\
\cline{2-20}
& LB & \multicolumn{3}{c|}{DAFL}
& LB & \multicolumn{3}{c|}{DAFL}
& LB & \multicolumn{3}{c|}{DAFL}
& LB & \multicolumn{6}{c|}{DAFL}\\
\cline{3-5}
\cline{7-9}
\cline{11-13}
\cline{15-20}
&  &  $q=2$ & $q=3$ & $q=4$
&  &  $q=2$ & $q=3$ & $q=4$
&  &  $q=2$ & $q=3$ & $q=4$ 
&  & $q=2$ & $q=3$ & $q=4$ 
&  $q=2$ & $q=3$ & $q=4$  \\
\hline
D1 & 0      & 0.0001      & 0.0001      & 0.0001      & 0.001  & 0.001 & 0.001 & 0.001 & 24913 & 24909 & 24909 & 24909 & 0.993 & 0.390 & 0.291 & 0.373 & 33.666 & 35.219 & 34.900 \\

D2 & 0.0005 & 0.1479 & 0.1479 & 0.0798 & 0.0005 & 0.001 & 0.001 & 0.001 & 35571 & 24913 & 24913 & 24911 & 7.258 & 0.351 & 0.343 & 0.364 & 34.782 & 38.081 & 40.032 \\

D3 & 0.0009 & 0.0558  & 0.2836  & 0.3492 & 0.0001 & 0.001 & 0.001 & 0.001 & 81622 & 39060 & 3080 & 7 & 5.526 & 0.682 & 0.042 & 0.297 & 55.485 & 4.238 & 0.012 \\

D4 & 0.0003  & 0.0795 & 0.0795 & 0.0461 & 0.0007 & 0.001 & 0.001 & 0.001 & 28020 & 24309 & 24309 & 24311 & 4.406 & 0.334 & 0.395 & 0.318 & 32.878 & 32.897 & 34.163 \\

D5 & 0.0005      & 0.1349 & 0.1349 & 0.0703 & 0.0005 & 0.001 & 0.001 & 0.001 & 27709 & 21014 & 21014 & 21015 & 5.462 & 0.313 & 0.286 & 0.246 & 29.523 & 28.218 & 29.037 \\

D6 & 0.0008 & 0.1651 & 0.1650 & 0.1260 & 0.0002 & 0.001 & 0.001 & 0.001 & 67434 & 18368 & 18367 & 18367 & 4.037 & 0.359 & 0.348 & 0.278 & 25.205 & 24.907 & 25.755 \\
\hline
\end{tabular}
\end{table*}

\begin{figure}[!t]
\centering
\subfloat[] {
    \includegraphics[scale = 0.235]{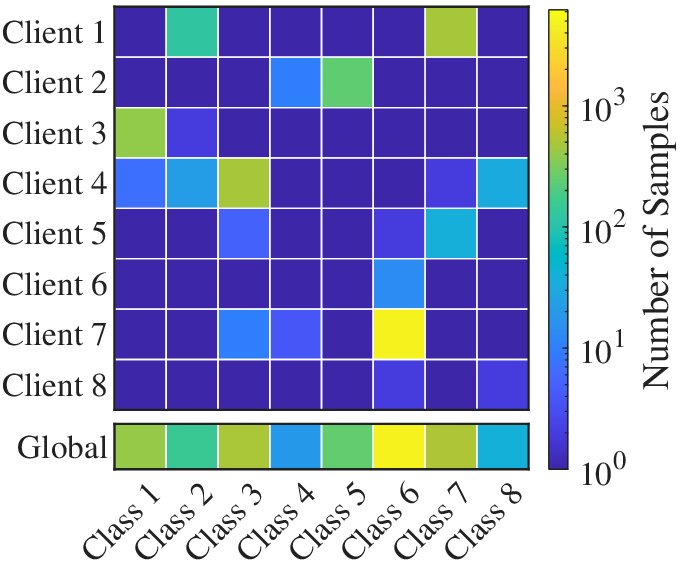}
    \label{fig:Dataset2_N}
}
\subfloat[] {
    \includegraphics[scale = 0.235]{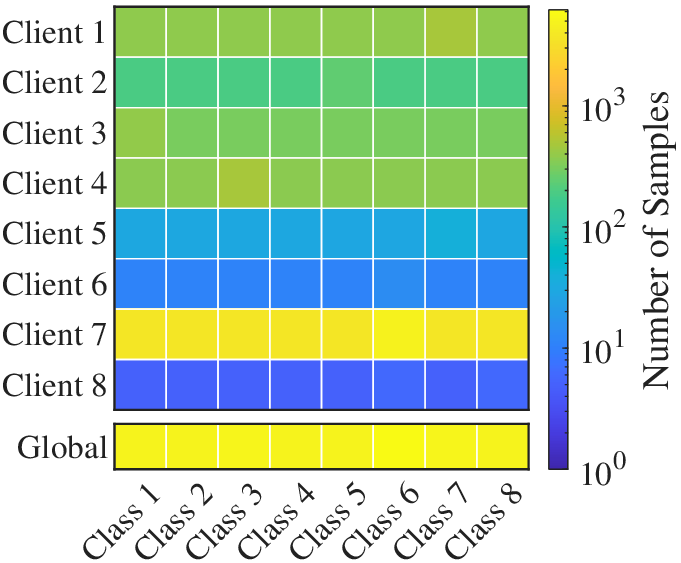}
    \label{fig:Dataset2_LBJ_Nprime}
}
\subfloat[] {
    \includegraphics[scale = 0.235]{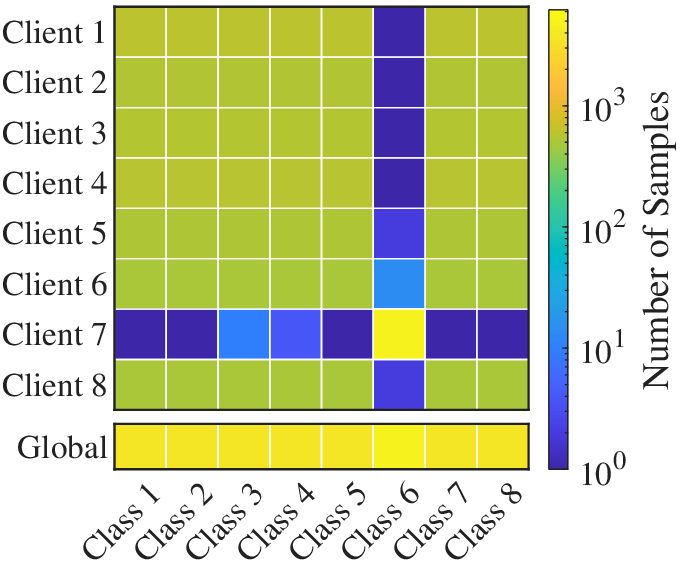}
    \label{fig:Dataset2_Nprime}
}
\caption{Heatmap for client-class counts for D2 (a) before augmentation, (b) after LB, and (c) after DAFL.}
\label{fig:updated_dataset2}
\end{figure}
\begin{figure*}[]
\centering
\vspace{-0.4cm}
\subfloat[] {
    \includegraphics[scale = 0.14]{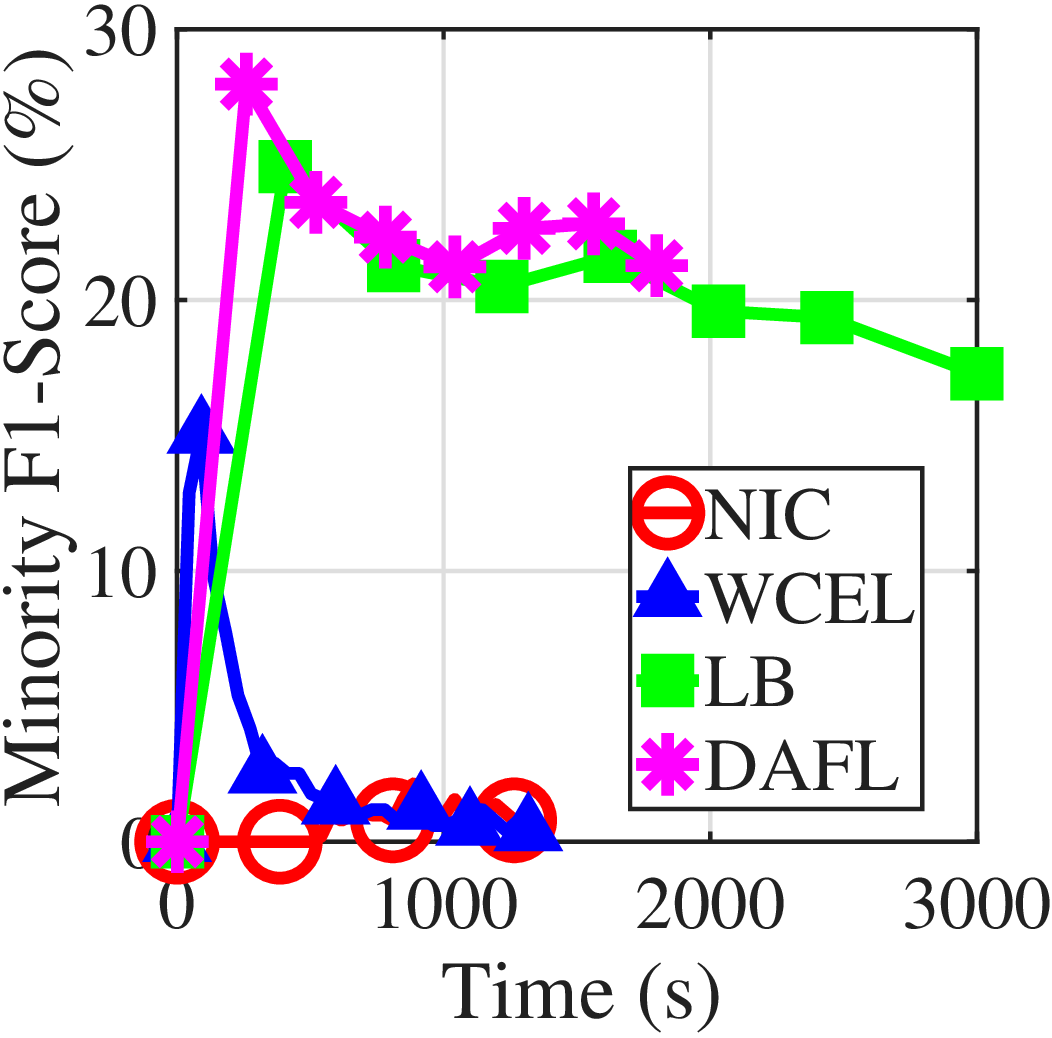}
    \label{fig:I_0_0_J_0_4097_minority_f1score}
}
\subfloat[] {
    \includegraphics[scale = 0.14]{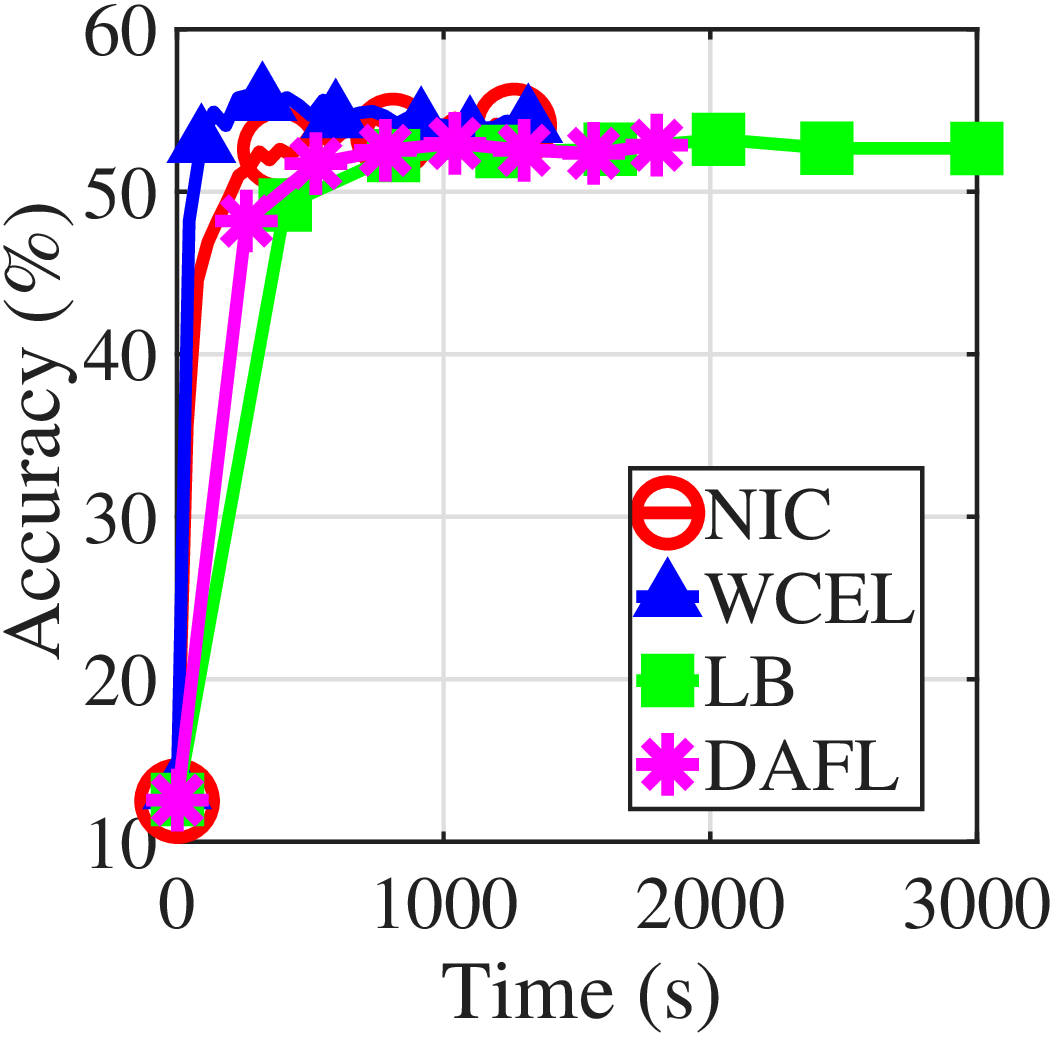}
    
    \label{fig:I_0_0_J_0_4097_accuracy}
}
\subfloat[] {
    \includegraphics[scale = 0.125]{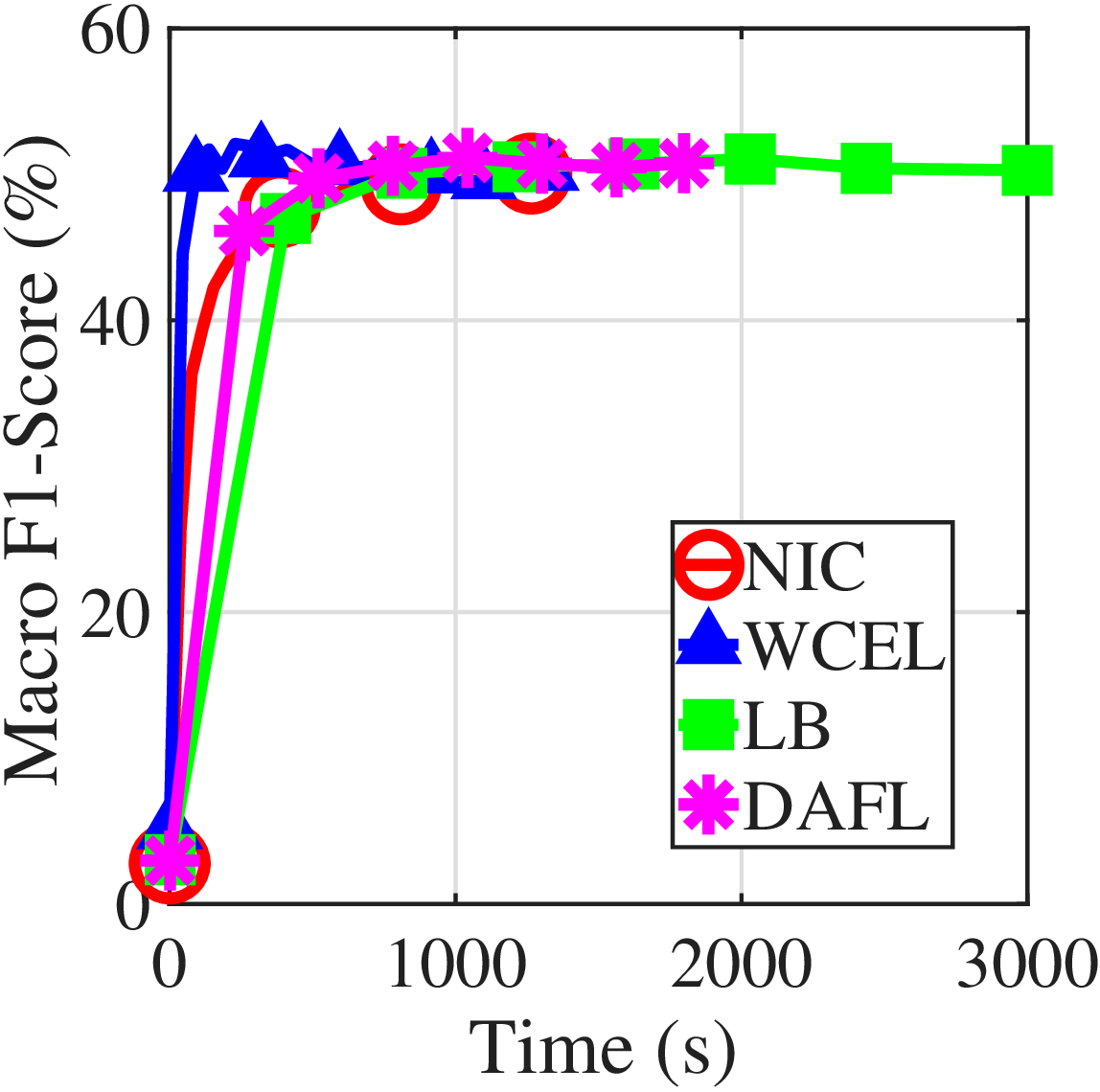}
    \label{fig:I_0_0_J_0_4097_mean_f1score}
}
\subfloat[] {
    \includegraphics[scale = 0.125]{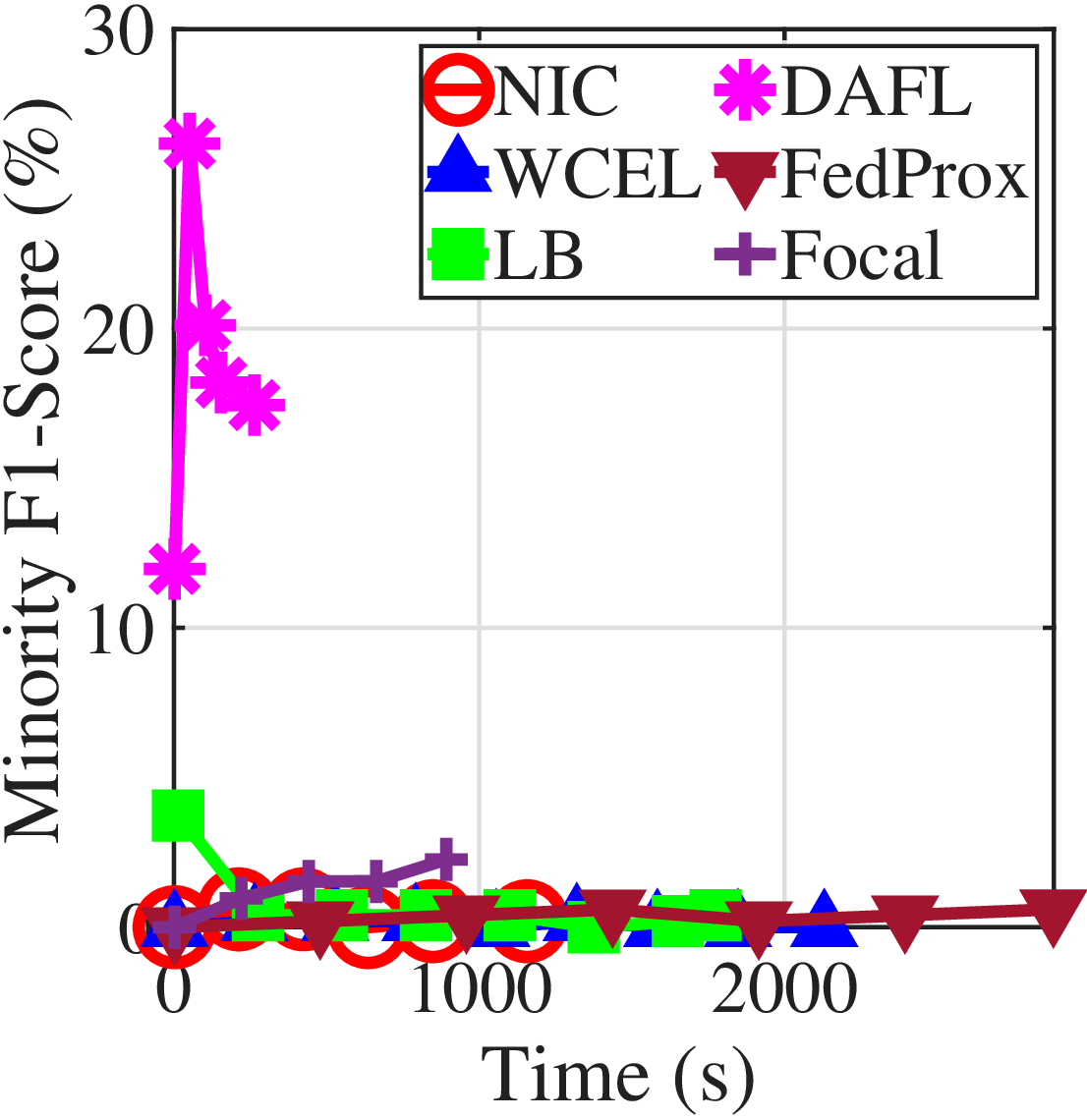}  
    \label{fig:I_0_0713_J_0_4097_minority_f1score}
}
\subfloat[] {
    
    \includegraphics[scale = 0.14]{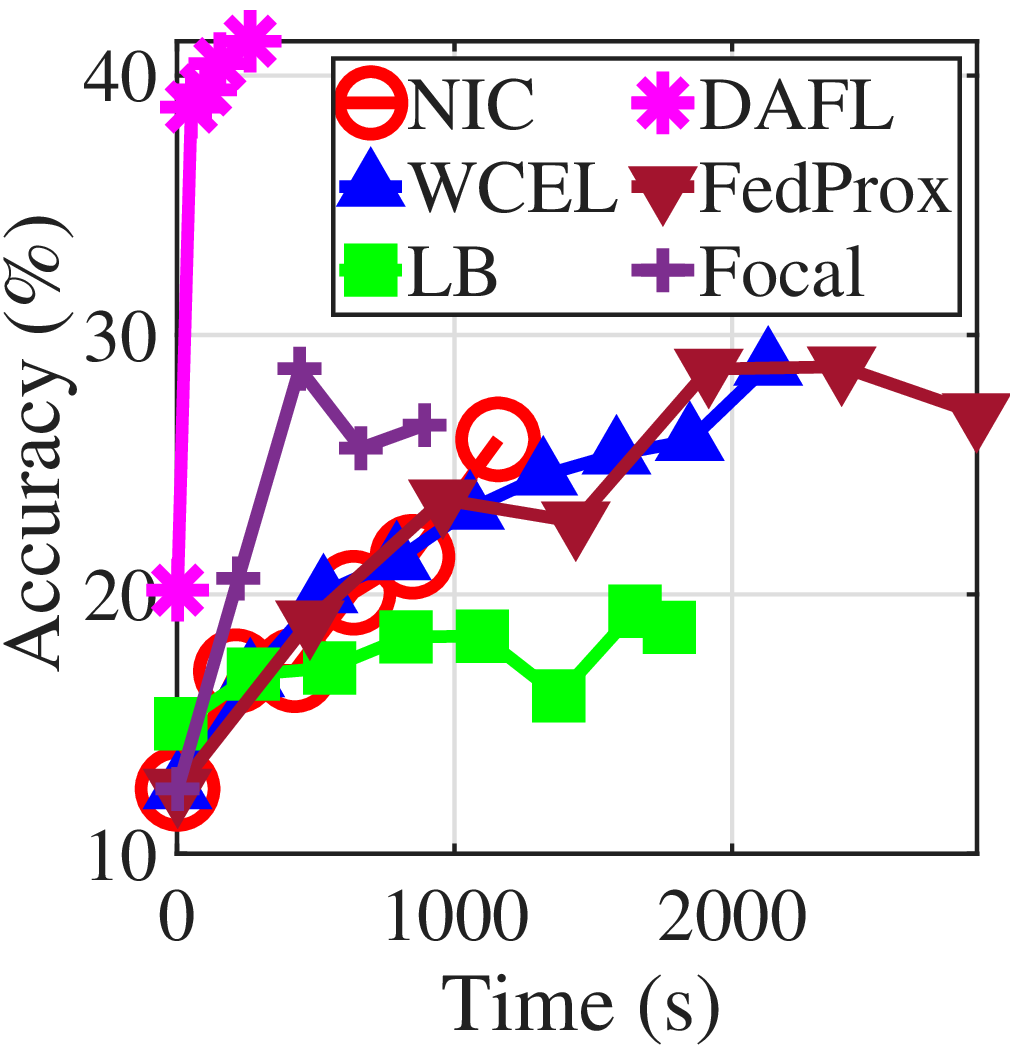}
    \label{fig:I_0_0713_J_0_4097_accuracy}
}
\subfloat[] {
    \includegraphics[scale = 0.125]{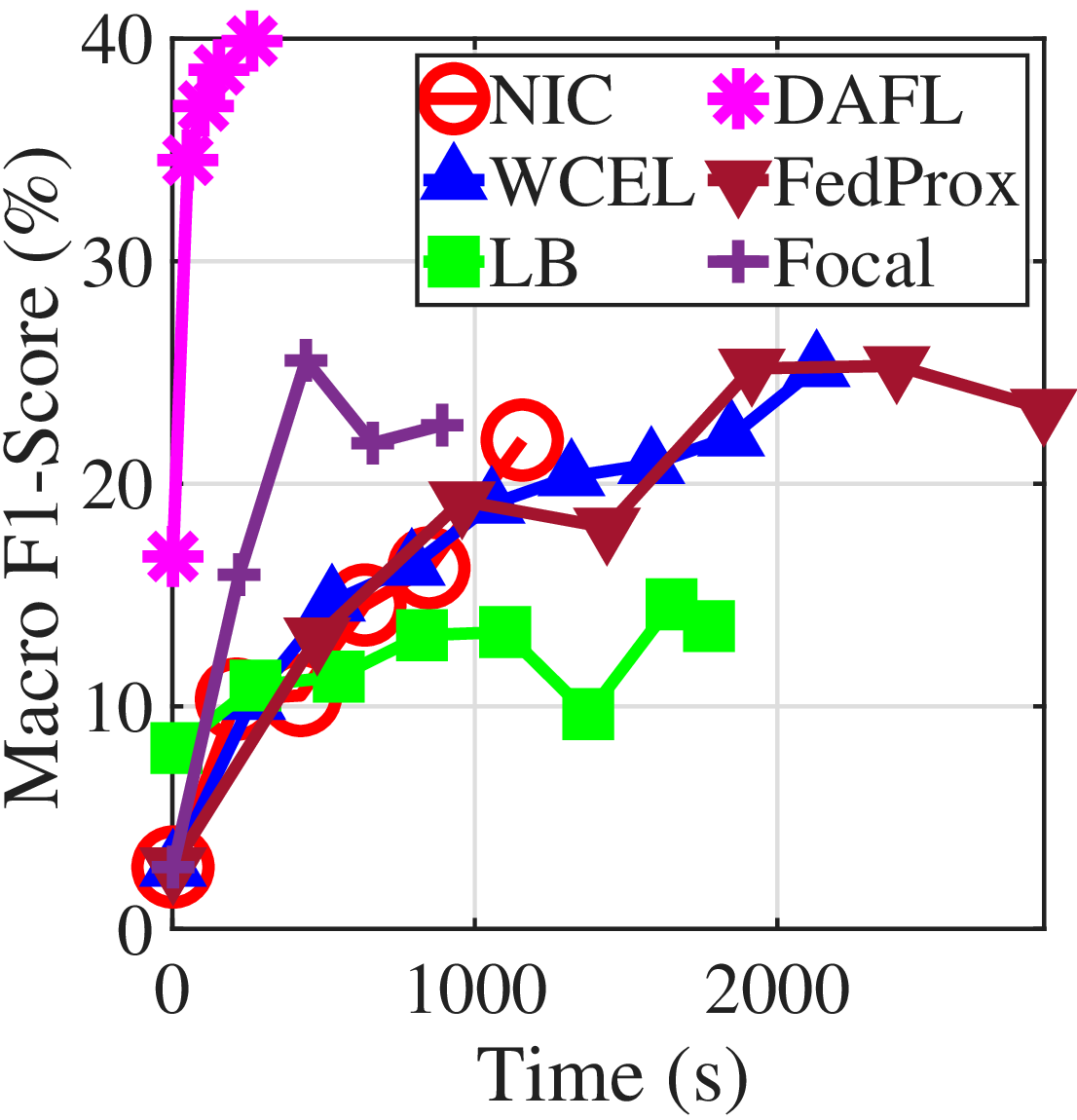}
   \label{fig:I_0_0713_J_0_4097_mean_f1score}
}

\subfloat[] {
    \includegraphics[scale = 0.14]{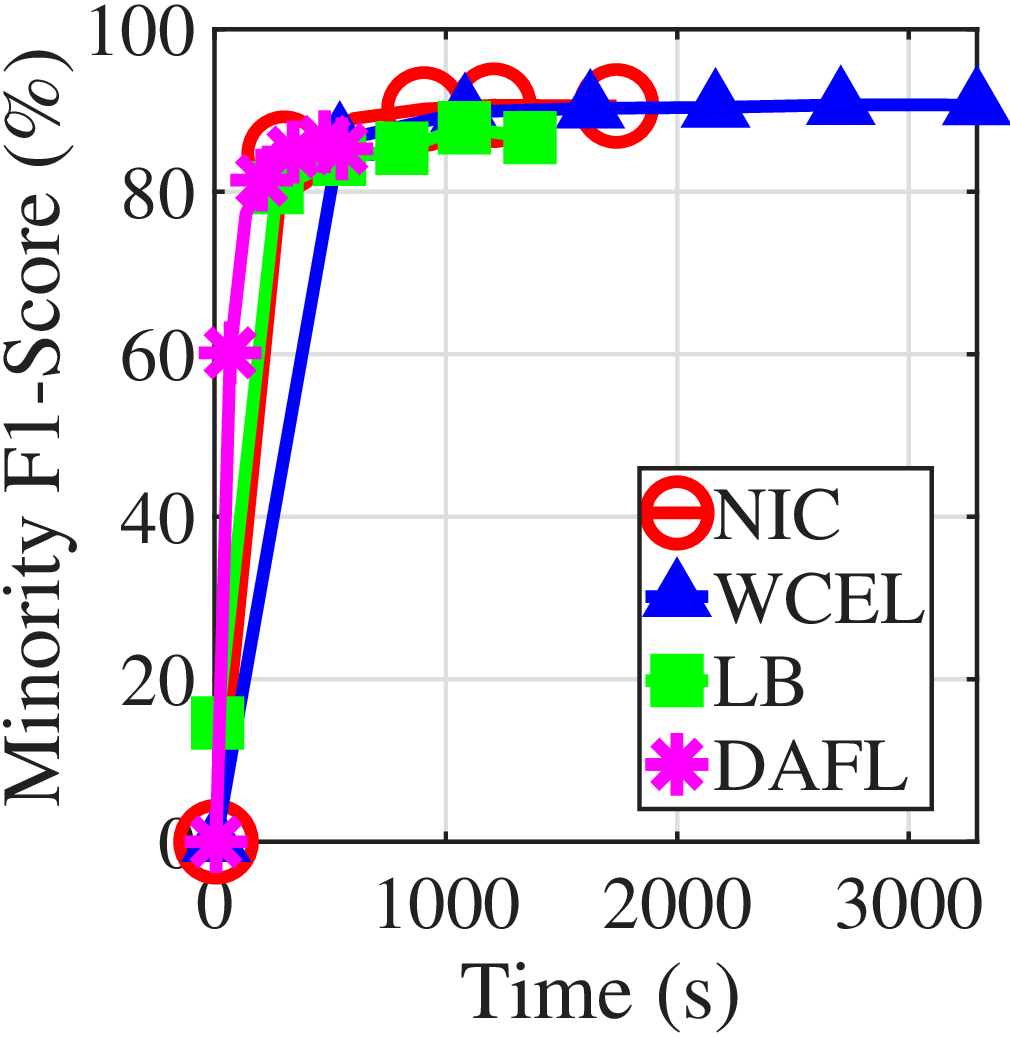}
    \label{fig:I_0_0713_J_0_001_minority_f1score}
}
\subfloat[] {
    \includegraphics[scale = 0.14]{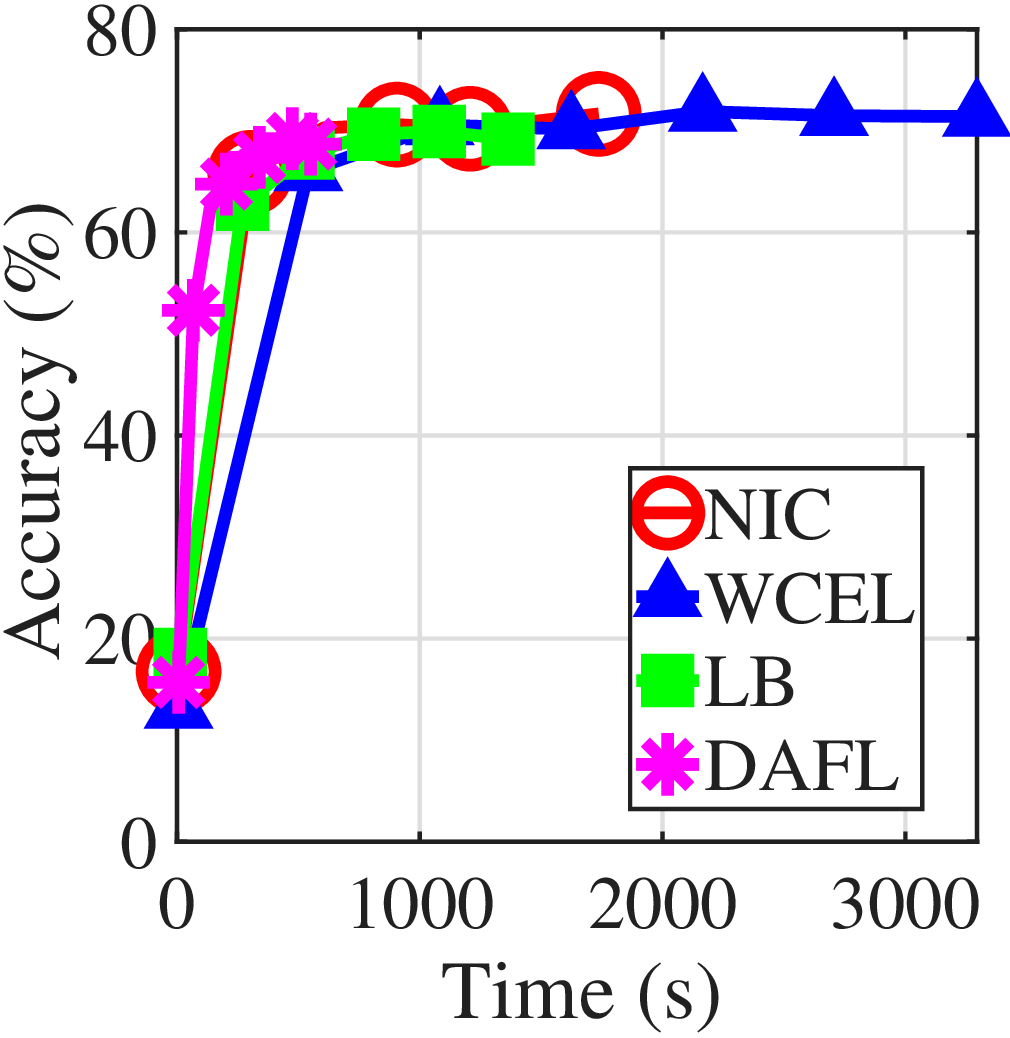}
    \label{fig:I_0_0713_J_0_001_accuracy}
}
\subfloat[] {
    \includegraphics[scale = 0.14]{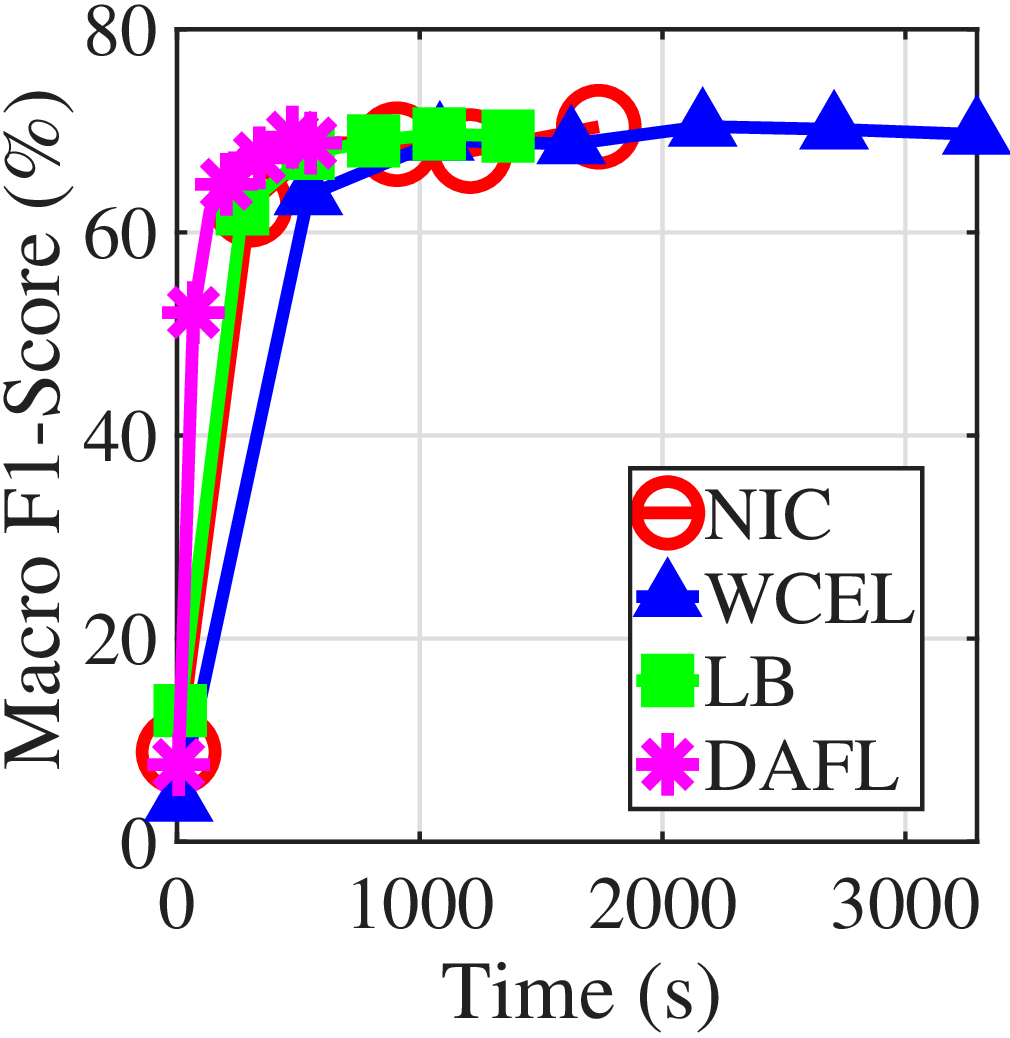} 
    \label{fig:I_0_0713_J_0_001_mean_f1score}
}
\subfloat[] {  
    \includegraphics[scale = 0.14]{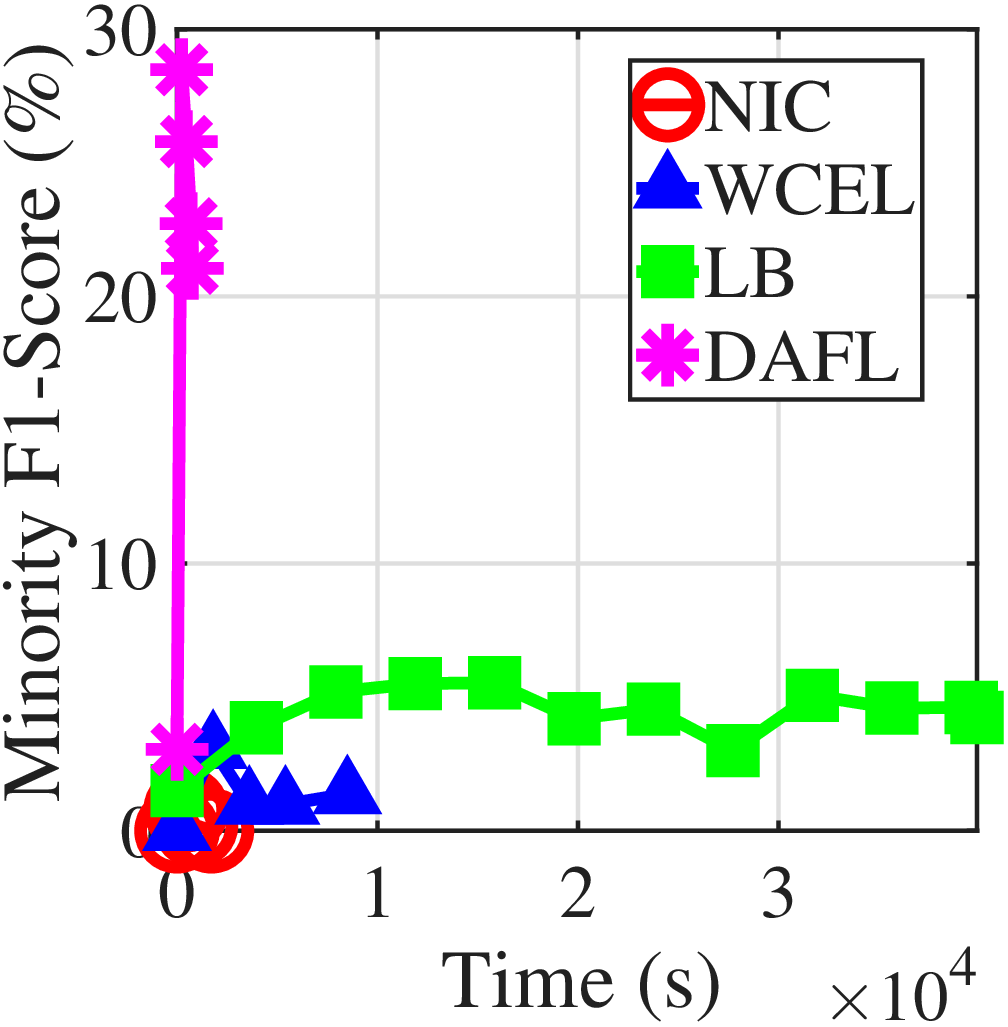}
    \label{fig:I_0_0322_J_0_4037_minority_f1score}
}
\subfloat[] {
    \includegraphics[scale = 0.14]{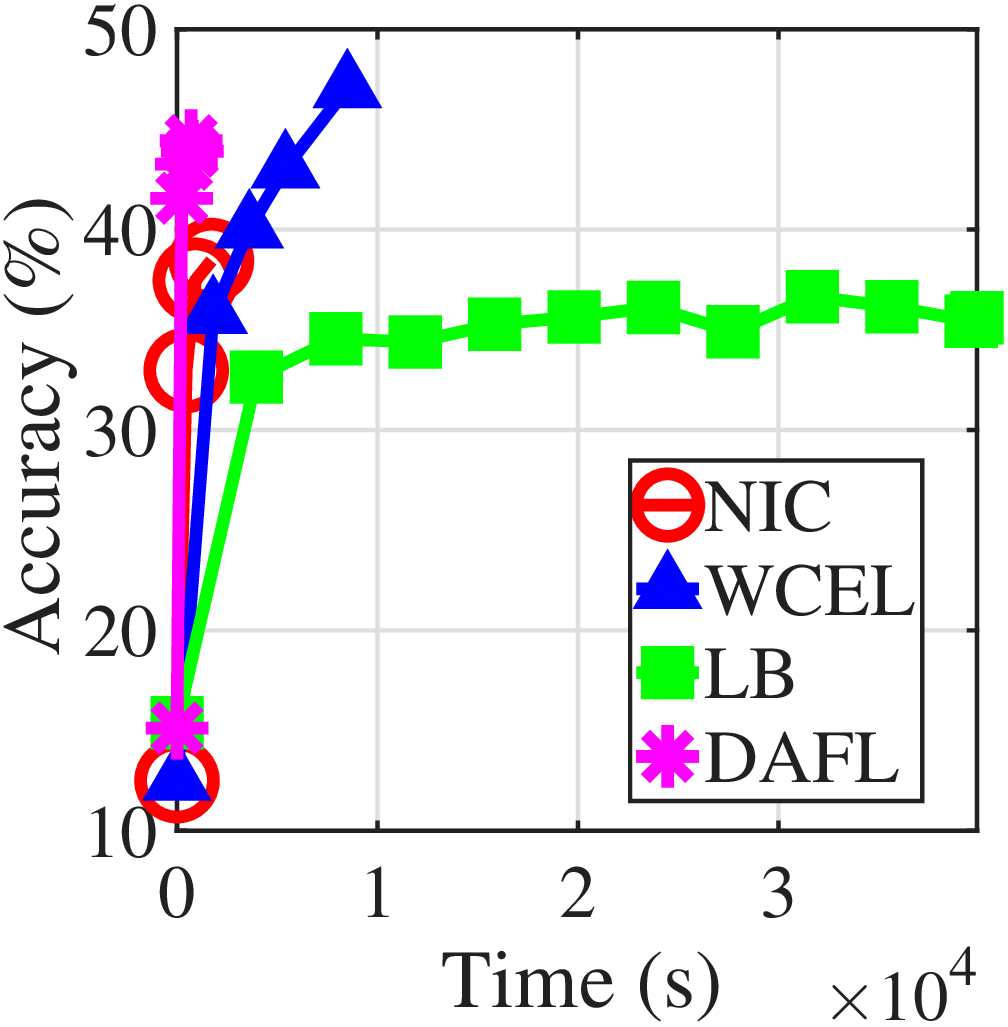}
    \label{fig:I_0_0322_J_0_4037_accuracy}
}
\subfloat[] {
    \includegraphics[scale = 0.14]{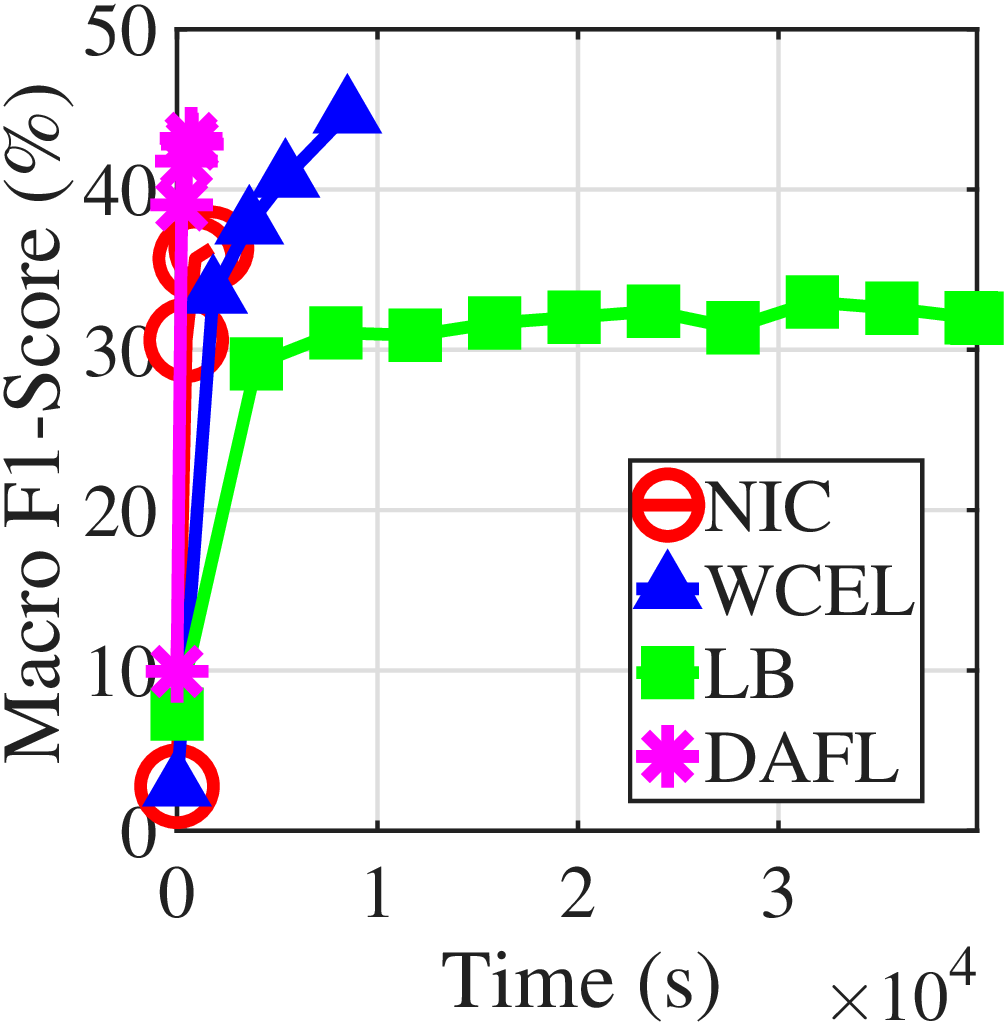}  
    \label{fig:I_0_0322_J_0_4037_mean_f1score}
}

\subfloat[] {
    \includegraphics[scale = 0.14]{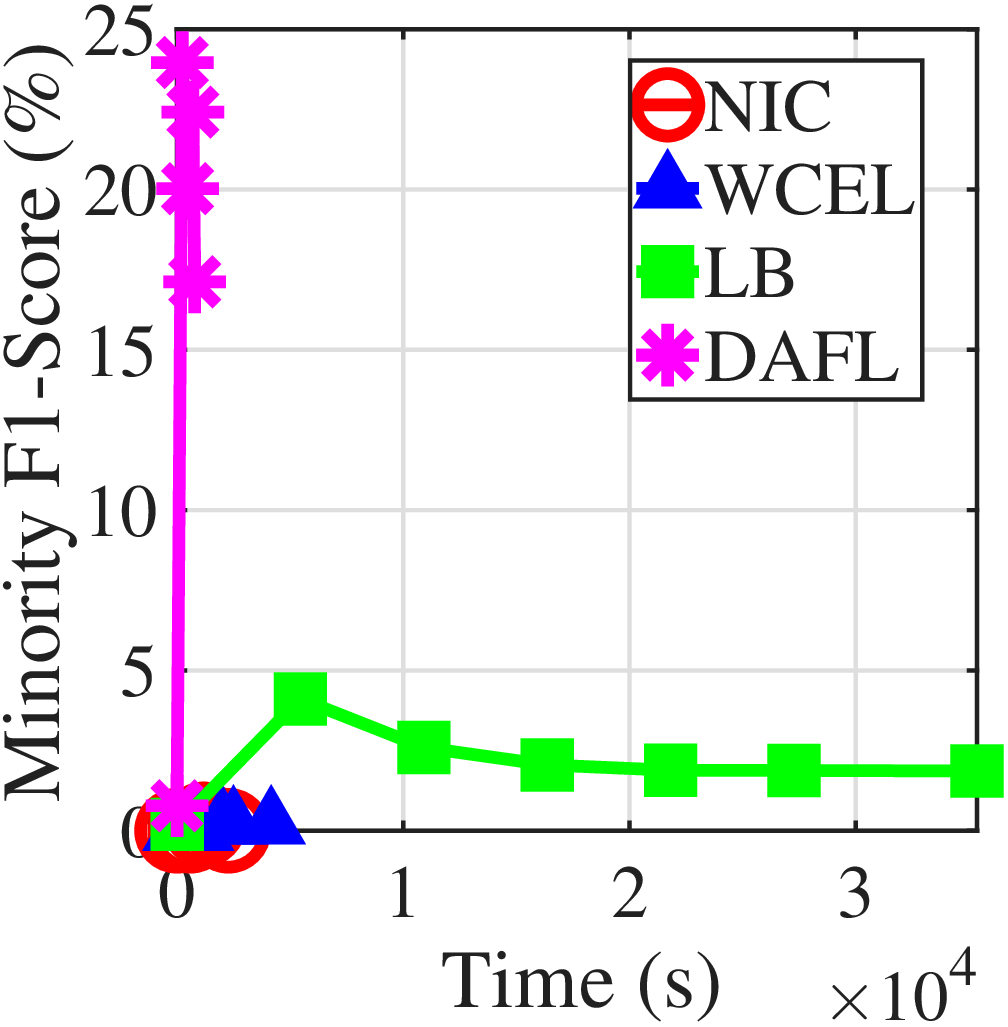}
    \label{fig:I_0_0627_J_0_3657_minority_f1score}
}
\subfloat[] {   
    \includegraphics[scale = 0.14]{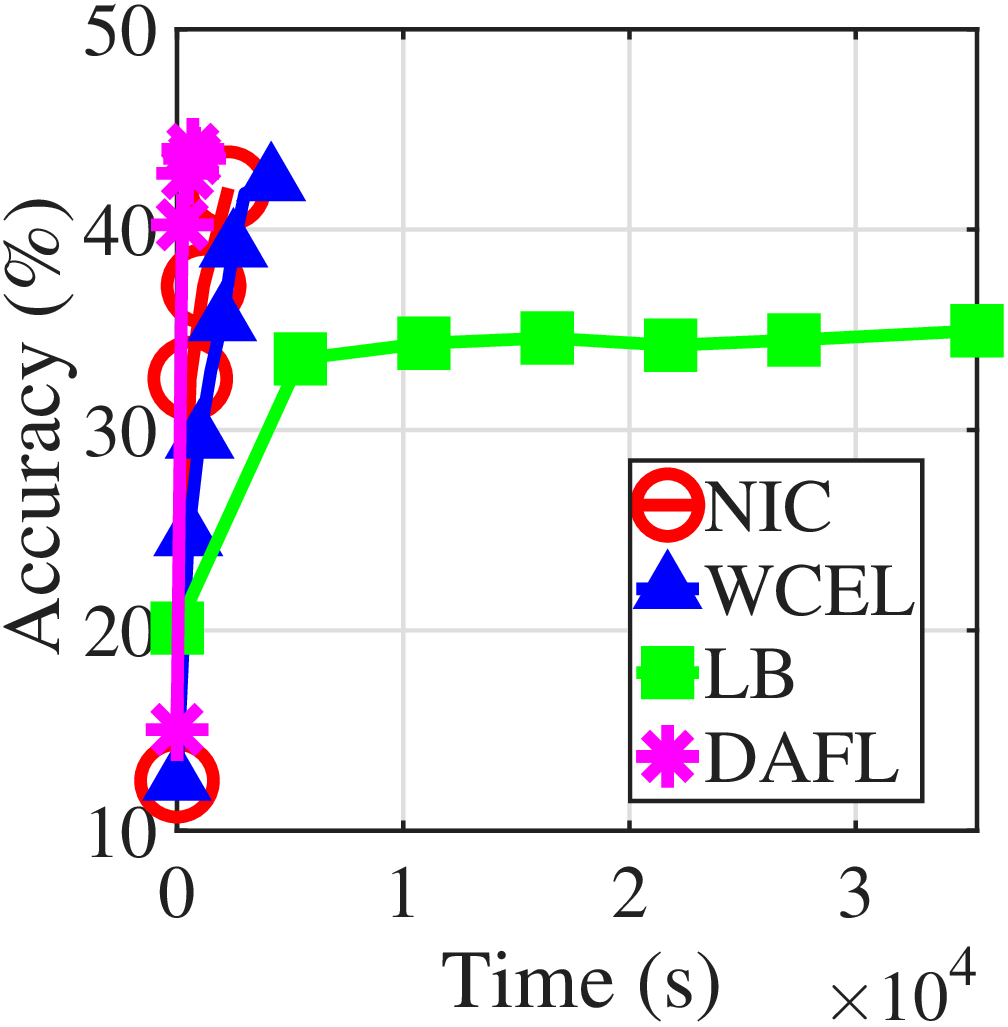}   
    \label{fig:I_0_0627_J_0_3657_accuracy}
}
\subfloat[] {
    \includegraphics[scale = 0.14]{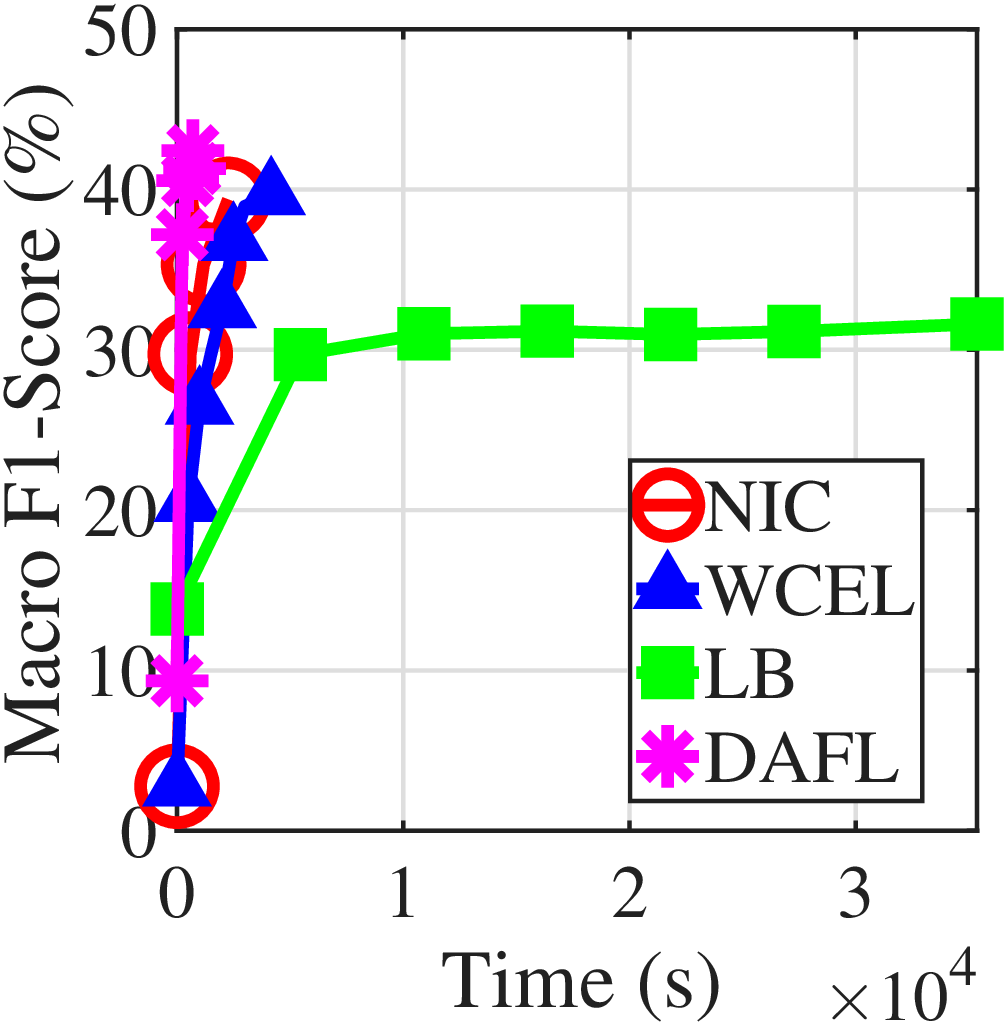}
    \label{fig:I_0_0627_J_0_3657_mean_f1score}
}
\subfloat[] {
    \includegraphics[scale = 0.14]{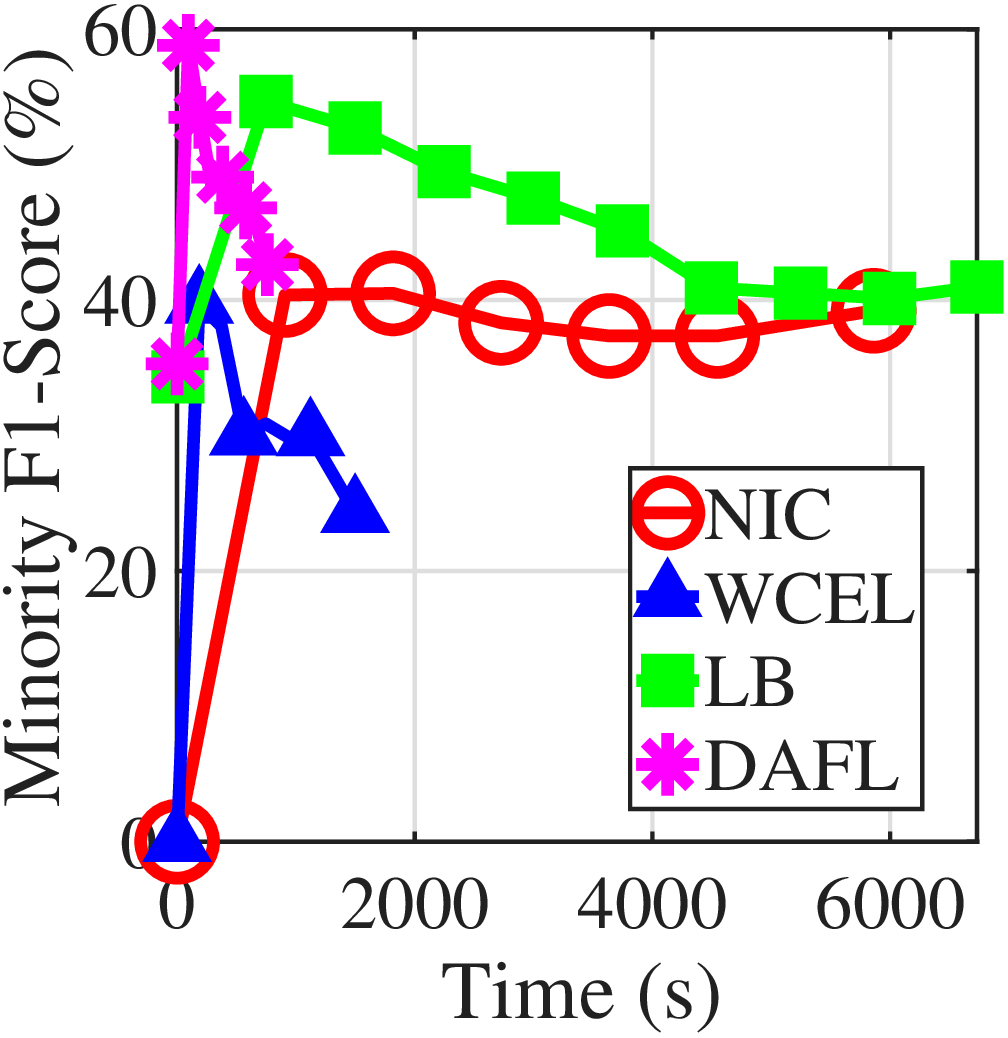}
    \label{fig:I_0_089_J_0_1309_minority_f1score}
}
\subfloat[] {
    \includegraphics[scale = 0.14]{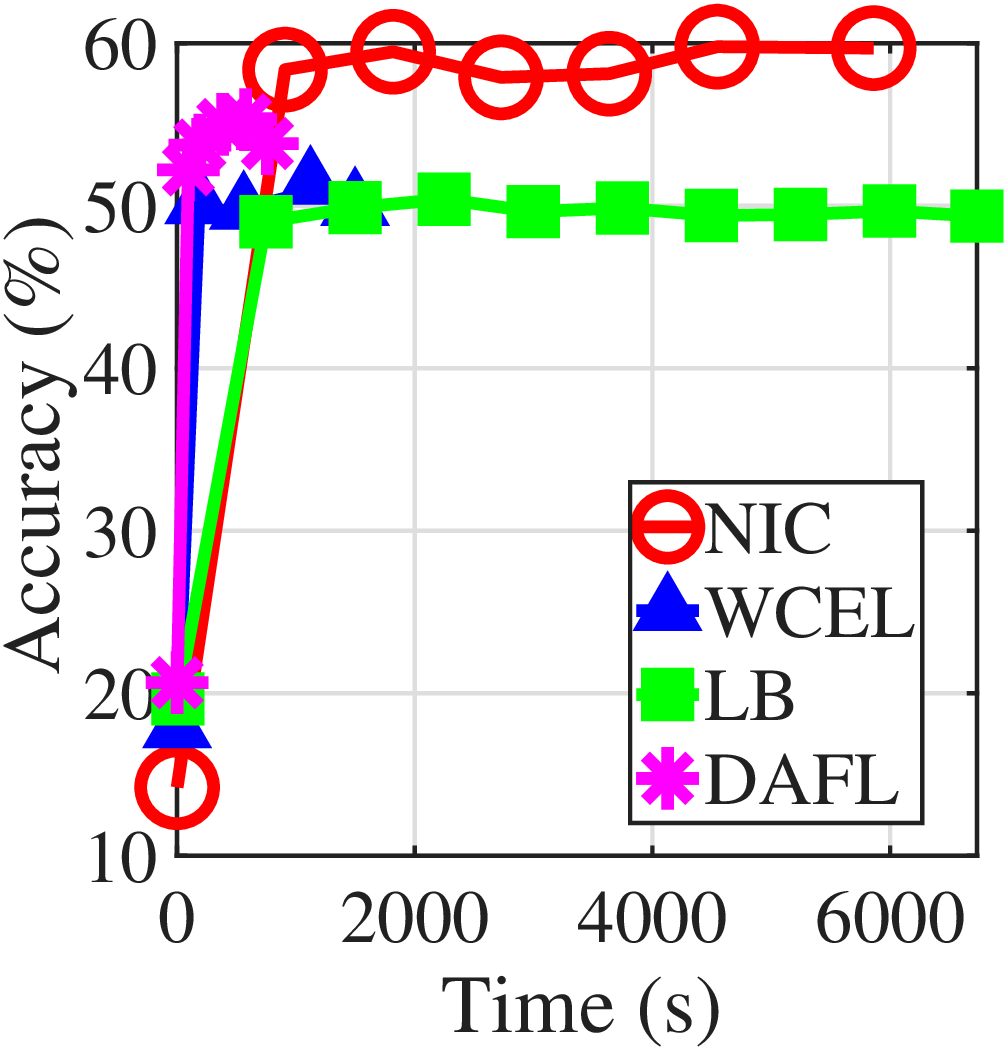}
    \label{fig:I_0_089_J_0_1309_accuracy}
}
\subfloat[] {
    \includegraphics[scale = 0.14]{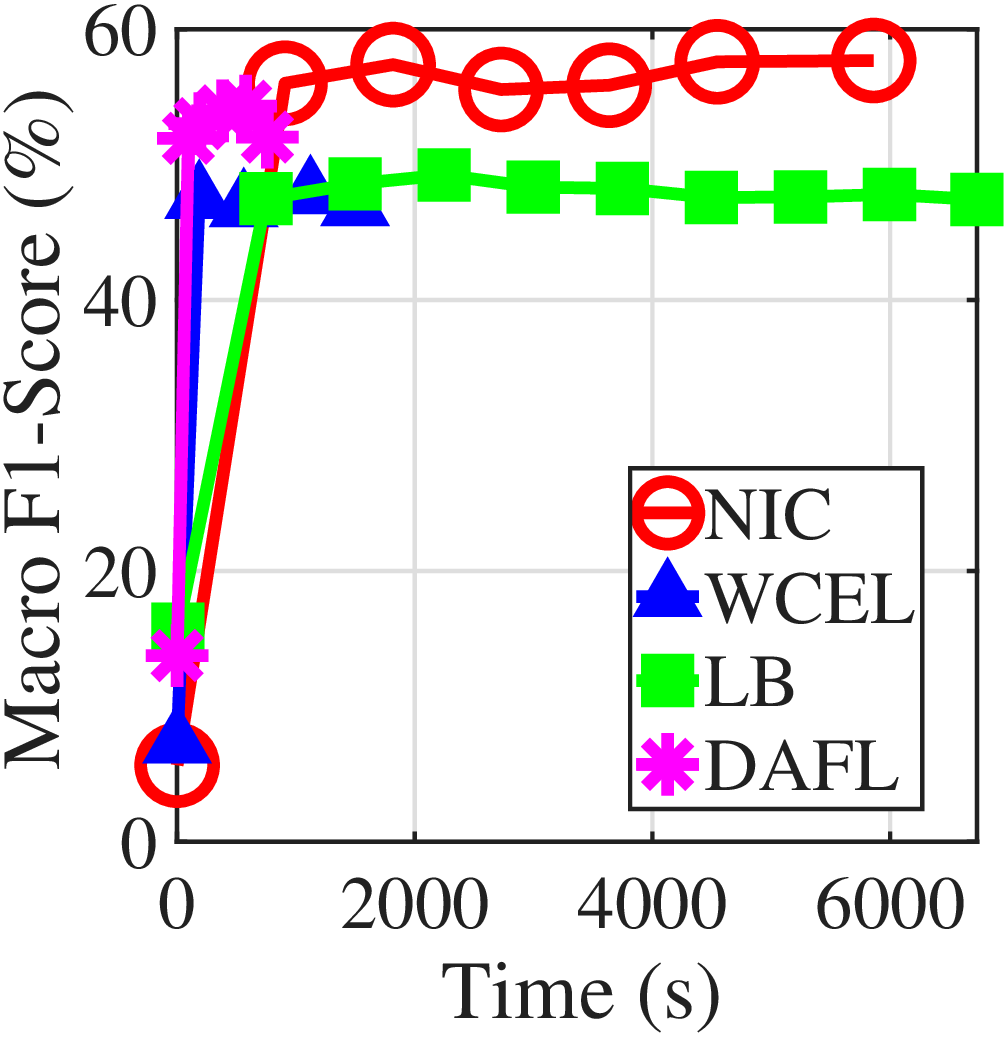}
    \label{fig:I_0_089_J_0_1309_mean_f1score}
}

\caption{Performance on datasets D1 (a)–(c), D2 (d)–(f), D3 (g)–(i), D4 (j)–(l), D5 (m)–(o), and D6 (p)–(r), with each triplet showing minority-class F1-score, accuracy, and macro F1-score, respectively.}
\vspace{-0.21cm}
\label{fig:datasetresults}
\end{figure*}
\begin{table*}[t]
\centering

\begin{minipage}[t]{0.684\linewidth}

\caption{Comparison of NIC, WCEL, LB, and DAFL across datasets D1–D6 in terms of the number of rounds and training time at the slowest client to convergence, and the resulting minority-class F1-score, accuracy, and macro F1-score on the test set.}
\label{tab:comparison}
\centering
    \tinyscript
\setlength{\tabcolsep}{1pt}

\begin{tabular}{|l|l|c|c|c|c|c|c|}
\hline
Dataset & Metric & D1 & D2 & D3 & D4 & D5 & D6 \\
\hline
\multirow{4}{*}{Rounds} & NIC & 2470.333±429.784 & 420.000±149.880 & 182.667±75.514 & 1218.333±310.989 & 1663.333±289.856 & 2517.667±1843.102 \\
 & WCEL & 3978.667±758.043 & 1466.333±390.070 & 552.333±115.106 & 6570.000±1309.133 & 2159.000±209.695 & 1402.333±426.602 \\
 & LB & 1495.667±1189.110 & 155.333±24.705 & 80.667±22.189 & 4527.333±451.948 & 2909.000±402.031 & 988.000±471.487 \\
 & DAFL & 1441.667±971.877 & 91.333±40.415 & 80.333±9.018 & 484.333±123.144 & 952.667±618.361 & 493.000±267.185 \\
\hline
\multirow{4}{*}{Time (s)} & NIC & 1058.379±163.601 & 899.667±307.991 & 1225.086±533.639 & 1555.357±398.398 & 2715.395±480.683 & 3179.173±2263.749 \\
 & WCEL & 1686.528±355.184 & 3070.904±781.052 & 3690.965±755.078 & 8436.223±1664.224 & 3743.674±279.595 & 1715.314±463.484 \\
 & LB & 3037.262±2428.344 & 2136.086±377.024 & 1089.099±299.965 & 35869.234±4244.298 & 31736.594±4284.032 & 7403.956±3519.098 \\
 & DAFL & 2453.844±1566.553 & \textbf{188.119±86.344} & \textbf{545.858±71.753} & \textbf{794.584±206.494} & \textbf{1550.310±999.016} & \textbf{878.630±474.515} \\
\hline
\multirow{4}{*}{\makecell{Minority\\F1}} & NIC & 0.008±0.002 & 0.001±0.001 & 0.896±0.011 & 0.004±0.005 & 0.001±0.002 & 0.390±0.037 \\
 & WCEL & 0.007±0.002 & 0.000±0.000 & 0.904±0.010 & 0.007±0.006 & 0.001±0.001 & 0.278±0.030 \\
 & LB & 0.152±0.041 & 0.005±0.003 & 0.859±0.007 & 0.036±0.010 & 0.018±0.001 & 0.364±0.046 \\
 & DAFL & \textbf{0.160±0.048} & \textbf{0.183±0.055} & 0.856±0.014 & \textbf{0.253±0.037} & \textbf{0.147±0.028} & \textbf{0.436±0.017} \\
\hline
\multirow{4}{*}{Accuracy} & NIC & 0.535±0.006 & 0.218±0.036 & 0.690±0.024 & 0.380±0.005 & 0.397±0.026 & 0.579±0.016 \\
 & WCEL & 0.540±0.003 & 0.303±0.013 & 0.721±0.021 & 0.442±0.025 & 0.403±0.019 & 0.499±0.001 \\
 & LB & 0.521±0.015 & 0.175±0.011 & 0.681±0.010 & 0.339±0.016 & 0.343±0.006 & 0.483±0.010 \\
 & DAFL & 0.516±0.012 & 0.406±0.007 & 0.689±0.002 & 0.451±0.010 & 0.437±0.026 & 0.544±0.014 \\
\hline
\multirow{4}{*}{Macro F1} & NIC & 0.495±0.006 & 0.170±0.043 & 0.669±0.031 & 0.360±0.003 & 0.368±0.026 & 0.555±0.019 \\
 & WCEL & 0.501±0.003 & 0.264±0.011 & 0.707±0.026 & 0.419±0.025 & 0.374±0.019 & 0.464±0.005 \\
 & LB & 0.495±0.015 & 0.117±0.017 & 0.684±0.010 & 0.299±0.023 & 0.312±0.006 & 0.462±0.012 \\
 & DAFL & 0.491±0.017 & 0.392±0.008 & 0.689±0.001 & 0.445±0.015 & 0.413±0.029 & 0.530±0.015 \\
\hline
\end{tabular}

\end{minipage}
\hfill
\begin{minipage}[t]{0.3\linewidth}

    \caption{Comparison of FedProx, Focal Loss, and DAFL on D2. The number of rounds and training time for convergence, and the minority-class F1-score, accuracy, and macro F1-score.}
    \label{tab:D2comparison}
    \centering
    \tinyscript
    \setlength{\tabcolsep}{1pt}
    \begin{tabular}{|c|c|c|}
        \hline
        Dataset & Metric & D2 \\
        \hline
        
        \multirow{4}{*}{Rounds}
        & FedProx           & 1740.667±1373.816 \\
        & Focal            & 262.000±124.756 \\
        & DAFL           & 91.333±40.415 \\
        \hline
        \multirow{4}{*}{Time (s)}
        & FedProx      & 4167.974±3232.481 \\
        & Focal        & 578.554±271.105 \\
        & DAFL         & \textbf{188.119±86.344}\\
        \hline
        \multirow{4}{*}{Minority F1}
        & FedProx  & 0.002±0.003 \\
        & Focal  & 0.023±0.019 \\
        & DAFL  & \textbf{0.183±0.055} \\
        \hline
        
        \multirow{4}{*}{Accuracy}
        & FedProx  & 0.293±0.054 \\
        & Focal    & 0.229±0.033 \\
        & DAFL     & 0.406±0.007 \\
        \hline
        
        \multirow{4}{*}{Macro F1}
        & FedProx  & 0.255±0.054 \\
        & Focal    & 0.182±0.038 \\
        & DAFL     & 0.392±0.008 \\
        \hline
    \end{tabular}
\end{minipage}

\end{table*}

\section{Experimental Results}\label{sec:expres}
     We use a modified version of the ConvNet architecture proposed in \cite{ConvNet}. It has six convolutional layers followed by three fully connected layers, with 1,343,146 learnable parameters.
     Local training is performed using SGD with a learning rate ranging from 0.003 to 0.3 for 2 epochs with a batch size of 64, and aggregation is performed using FedAvg. DAFL is implemented in PyTorch and trained and evaluated on a system equipped with an NVIDIA T1000 8 GB GPU.
    
     We conduct our experiments with $\zeta\! =\! 1$ and $\zeta'\! =\! 2$. The augmentation phase is conducted with $\epsilon\! =\! 0.001$, $\tau\! =\! 0.001$, $\lambda_1\! =\! 1$, $s\! =\! 1$ and $\lambda_2\! \in\! \{10^2, 10^3, 10^4\}$. For all training experiments, we use $\lambda_2\! = \!10^2$. We set $\epsilon\! =\! 0.001$ so that the $\mathcal{I}$-dependent term in the convergence bound remains small, and $\tau\! =\! 0.001$ to avoid the cost of exact balancing while avoiding substantial residual global imbalance. However, we do not tune $\epsilon$ and $\tau$.
     Since $L^2CG^2$ is an unknown scaling constant, it is set to 1 in all experiments. We define convergence based on the stability of the training loss.
     Specifically, training is considered to have converged when the difference in training loss over five consecutive rounds is less than 0.001, and the corresponding training duration is reported as the convergence time. We do not use a common performance target since several baselines do not achieve a useful minority-class F1-score under severe imbalance.
     Unless stated otherwise, all experiments are repeated over three random seeds, and the results are reported as mean ± standard deviation. Minority-class performance is evaluated using the minority-class F1-score which refer to F1-score of the minority-class.
     We compare DAFL against the following baselines: (i) 
     \textit{No Imbalance Correction (NIC)}, (ii) \textit{Weighted Cross Entropy Loss (WCEL)}, (iii) \textit{Local Balancing (LB)},  (iv) \textit{FedProx}, and (v) \textit{Focal Loss}.
    \subsection{Experimental Evaluation - Augmentation Phase}
    To augment additional samples at each client, we use low-complexity methods such as rotation within $[-30^{\circ},30^{\circ}]$ and horizontal flipping. For the optimization problem, we fix $\lambda_1$ and $\epsilon$, and vary $\lambda_2 \in \{10^2,10^3,10^4\}$. 
    Table \ref{tab:updated_dataset_details} reports the updated $\mathcal{I}$ and $J$ for LB and DAFL across $\lambda_2\! \in\! \{10^2, 10^3, 10^4\}$.
    The updated dataset obtained using DAFL satisfies $J \leq \tau$.  Additionally,  DAFL aligns the local distribution with the global distribution to the extent required to minimize the optimization objective, hence, $\mathcal{I}$ is not necessarily close to zero. At the same time, since LB needs every client to satisfy $J_k \le \tau$, the clients' local distributions are forced to match the global distribution, yielding $\mathcal{I} \approx 0$, and consequently,  $J \leq \tau$. In addition to the resulting imbalance measures, Table \ref{tab:updated_dataset_details} also reports the total number of samples augmented in the federation, the time taken for augmentation, which is computed as the time taken by the slowest client to perform augmentation, and the additional time taken at the server to compute $\hat{\mathbf{A}}$. From this table, we have the following observations. First, the augmentation time for DAFL is lower than that of LB. However, we point out that this is offset by the added server-side computation time, which must be compensated for by the reduction in training time. 
    Second, when $J\le\tau$ and $\mathcal I\gg0$, DAFL may require more augmented samples than LB because the greedy optimization algorithm does not guarantee a globally optimal solution. 
    Finally, to visualize this, Fig. \ref{fig:updated_dataset2} shows the heatmap of client-class counts for D2 before augmentation, after LB, and after DAFL. In particular, Fig. \ref{fig:Dataset2_N} shows that D2 is highly imbalanced, and  Fig. \ref{fig:Dataset2_LBJ_Nprime} shows that every client is locally balanced, resulting in a global balance. Fig. \ref{fig:Dataset2_Nprime} shows that, under DAFL, the dataset becomes globally balanced and the local distributions align more closely with the global distribution than in the original dataset even though local balance is not achieved. This observation is in alignment with the results tabulated in  Table \ref{tab:updated_dataset_details}, wherein for D2 with $\lambda_2 = 100$, DAFL reduces $\mathcal{I}$ from $0.3982$ to $0.1479$ and $J$ from $0.4097$ to $0.001$. 
    
    \subsection{Experimental Evaluation - FL Phase}
    \begin{figure*}
            \begin{footnotesize}
            \begin{IEEEeqnarray} {lcl} \label{eqn:conditional_expectation_on_smoothness_inequality}
            \mathbb{E}_t [F(\mathbf{w}^{(t+1)})]
            \le
            F(\mathbf{w}^{(t)})
            -
            \eta \langle 
            \nabla F (\mathbf{w}^{(t)}),
            \mathbb{E}_t[\mathbf{g}^{(t)}]
            \rangle   +
            \frac{L\eta^2}{2} \mathbb{E}_t
             [ \|\mathbf{g}^{(t)}\|^2 ]. 
             \\
             \nabla F_k(\mathbf w) - \nabla F(\mathbf w) = \underbrace{\textstyle\sum_c(\pi_k(c)-\pi(c))\nabla\bar F_c(\mathbf w)}_{=: \, u_k(\mathbf w) \ \text{(label proportion imbalance)}} + \underbrace{\textstyle\sum_c \pi_k(c)(\nabla F_{k,c}(\mathbf w) - \nabla \bar F_c(\mathbf w))}_{=: \, v_k(\mathbf w) \ \text{(conditional shift)}}. 
             \label{eqn:hetero-decomp}
             \\
             \mathbb E_t\|\mathbf{z}^{(t)}\|^2 \;\le\; (E-1)^2[8\eta^2E^2L^2\|{\nabla F(\mathbf w^{(t)})}\|^2+16\eta^2E^2L^2(CG^2\mathcal I+\Gamma)+2\eta^2E^2L^2\sigma^2]. 
             \label{eq:cor1}
             \\
             \frac{L\eta^2}2\mathbb E_t\|{\mathbf g^{(t)}}\|^2 \le \frac{3L\eta^2E^2}2\|{\nabla F(\mathbf w^{(t)})}\|^2 + L^3\eta^4E^2(E-1)^2(12\|{\nabla F(\mathbf w^{(t)})}\|^2 + 24(CG^2\mathcal I+\Gamma) + 3\sigma^2) + \frac{3L\eta^2E}2\sigma_g^2. 
             \label{eq:step6-d}
             \\
             \mathbb E_t[F(\mathbf w^{(t+1)})] \le F(\mathbf w^{(t)}) \;-\;[\eta E - \frac\eta8 - 16A - \frac{3L\eta^2E^2}2 - 12B]\|{\nabla F(\mathbf w^{(t)})}\|^2 + (32A+24B)(CG^2\mathcal I+\Gamma) + (4A+3B)\sigma^2 + \frac{3L\eta^2E}2\sigma_g^2. 
             \label{eq:step6-e}
             \\
             \mathbb E_t[F(\mathbf w^{(t+1)})] \;\le\;\;  F(\mathbf w^{(t)}) - \frac{\eta E}2\|{\nabla F(\mathbf w^{(t)})}\|^2 + \frac{3L\eta^2E}2\sigma_g^2 + 40\,\eta^3E^2(E-1)^2L^2(CG^2\mathcal I+\Gamma+\sigma^2).
             \label{eq:descent-final}
             \\
             \frac1T{\textstyle\sum_{t=0}^{T-1}\mathbb E\|{\nabla F(\mathbf w^{(t)})}\|^2} \;\le\; \frac1{\eta E T}2(F(\mathbf w^0)-F(\mathbf w^*)) + 3L\eta\,\sigma_g^2 + 80\,\eta^2E(E-1)^2L^2(CG^2\mathcal I+\Gamma+\sigma^2). 
             \label{eq:telescope-final}
             \\ 
             \hline\nonumber
        \end{IEEEeqnarray}
        \end{footnotesize}
        \end{figure*}
    Table \ref{tab:comparison} reports the number of rounds and training time required for convergence, along with minority-class F1-score, accuracy, and macro F1-score on the test dataset for NIC, WCEL, LB and DAFL. 
    Fig.~\ref{fig:datasetresults} further plots the minority-class F1-score, accuracy, and macro F1-score against training time for the run with random seed 42. 
    From Table~\ref{tab:comparison} and Fig.~\ref{fig:datasetresults}, three key observations can be made. 
    First, DAFL is most effective when both $J$ and $\mathcal{I}$ are high. In D2, D4, D5, and D6, DAFL achieves a higher minority-class F1-score than the baselines while requiring less training time. 
    This improvement arises because reducing $J$ and $\mathcal I$ improves minority-class performance and accelerates convergence, respectively.
    Second, low $J$ and high $\mathcal I$ leave little scope to improve the minority-class F1-score, hence, DAFL mainly reduces the convergence time. In contrast, for low $\mathcal I$ and high $J$, DAFL improves the minority-class F1-score at the cost of increased training time. This explains the performance observed in D1 and D3. In D6, since $J$ is moderate, DAFL provides a minor improvement in the minority-class F1-score at the cost of reduced accuracy and macro F1-score compared to other baselines. This is because the other baselines improve minority-class performance less than that of DAFL, resulting in a lower minority-class F1-score but higher accuracy and macro F1-score than DAFL as plotted in Fig.~\ref{fig:I_0_089_J_0_1309_minority_f1score}-\ref{fig:I_0_089_J_0_1309_mean_f1score}.
    Third, since LB requires more augmentation than DAFL, when minority classes contain very few samples, repeated rotation and flipping generate duplicate images. This results in overfitting, leading to a poor minority-class F1-score. 
    Consequently, DAFL outperforms LB in minority-class performance while using fewer augmented samples in five of the six datasets. The exception is D3, where excessive augmentation to reduce $\mathcal{I}$ causes overfitting.
    Finally, Table~\ref{tab:D2comparison} compares DAFL with FedProx and Focal Loss on D2, the most imbalanced setting, to analyze how effectively DAFL addresses severe class imbalance. 
    DAFL achieves both the highest minority-class F1-score and the shortest training time, demonstrating its effectiveness under severe global class imbalance and label proportion imbalance.

\begin{appendix}
    \subsection{Proof of Lemma \ref{lem:gradient_decomposition}} \label{proof:gradient_decomposition}
    For any $k, \mathbf w$, we write $\nabla\! F_k(\mathbf w) \! = \! \nabla\! F_k(\mathbf w)\! - \!\nabla\! F(\mathbf w) \! +\! \nabla\! F(\mathbf w),$ wherein $\nabla\! F_k(\mathbf w)\! - \!\nabla\! F(\mathbf w)$ is as given in \eqref{eqn:hetero-decomp}. Then
    \begin{equation} \label{eq:before-uk-vk}
        \|\nabla\! F_k(\mathbf w)\|^2\! \le \!2\|\nabla\! F_k(\mathbf w)\! - \!\nabla\! F(\mathbf w)\|^2\! +\! 2\|\nabla\! F(\mathbf w)\|^2\!.
    \end{equation}
    Subsequently, applying Cauchy-Schwarz inequality  and using \eqref{eqn:delta_k2} and \textbf{A3}, we bound $u_k(\mathbf w)$ in \eqref{eqn:hetero-decomp} and, using \textbf{A5}, we bound $v_k(\mathbf w)$ in \eqref{eqn:hetero-decomp} as,
        \begin{IEEEeqnarray}{rcl}
            \|{u_k(\mathbf w)}\|^2 \le CG^2\delta_k^2,\qquad
            \|{v_k(\mathbf w)}\|^2 \le \Gamma_k^2 \qquad \forall \mathbf w \label{eq:uk-vk-bound}
        \end{IEEEeqnarray}    
    \noindent respectively. Hence, we obtain \eqref{eqn:lemma_gradient_bound} in Lemma \ref{lem:gradient_decomposition} on substituting \eqref{eq:uk-vk-bound} into \eqref{eq:before-uk-vk}. This completes the proof.
    
    \subsection{Proof of Theorem \ref{thm:convergence}}\label{proof:thm_convergence}
    
    From \textbf{A1} and \eqref{eqn:E_local_update}, we get
        \begin{IEEEeqnarray}{rcl}\label{eqn:smoothness_inequality_applied}
        F(\mathbf{w}^{(t+1)})
        \le
        F(\mathbf{w}^{(t)})
       & - &
        \eta
        \langle
        \nabla F(\mathbf{w}^{(t)}),
        \mathbf{g}^{(t)}
        \rangle +
        \frac{L\eta^2}{2}
        \|\mathbf{g}^{(t)}\|^2. \nonumber
    \end{IEEEeqnarray}
    
    \noindent Taking, $\mathbb E_t[\cdot]\!:= \!\mathbb E[\cdot\! \mid\! \mathbf w^{(t)}],$ 
    gives \eqref{eqn:conditional_expectation_on_smoothness_inequality}. To proceed along the lines of the standard proof for convergence, we need to upper bound the inner product term $\langle
        \nabla F(\mathbf{w}^{(t)}),
        \mathbb{E}_t[\mathbf{g}^{(t)}]
        \rangle $
        as a function of $\|\nabla F(\mathbf{w}^{(t)})\|^2$ along with some constants. Similarly, we need to upper bound $\mathbb{E}_t[\|\mathbf{g}^{(t)}\|^2]$ as a function of $\|\nabla F(\mathbf{w}^{(t)})\|^2$ along with some constants. Subsequently, we can rearrange the terms in \eqref{eqn:conditional_expectation_on_smoothness_inequality} for each $t$, and then apply the telescopic sum from $t = 0$ to $t = T-1$, to obtain an upper bound on $\frac{1}{T}\! \sum_{t=0}^{T-1}\!\mathbb{E}[ \|\nabla F(\mathbf{w}^{(t)})\|^2 ]$. To achieve this, we use the connection between the local gradient $\nabla F_{k}(\mathbf{w}^{(t)})$ and the global gradient $\nabla F(\mathbf{w}^{(t)})$ as given in Lemma \ref{lem:gradient_decomposition}.

The term $\mathbb{E}_t[\mathbf{g}^{(t)}]$ is a noisy version of $\nabla F_{k}(\mathbf{w}^{(t)})$ owing to the drift contributed by local updates at each client. This noise is captured by $\mathbf{z}^{(t)} \!=\! {\sum_{i=0}^{E-1} \sum_k p_k(\nabla F_k(\mathbf w_k^{(t,i)})\! -\! \nabla F_k(\mathbf w^{(t)}))}$ as 
\begin{equation}
    \mathbb E_t[\mathbf g^{(t)}] = E \nabla F(\mathbf w^{(t)}) + \mathbb E_t[\mathbf z^{(t)}]
         \label{eqn:firstmoment} 
\end{equation}
presented in  \eqref{eqn:firstmoment}. When we apply $\mathbb{E}_t[\mathbf{g}^{(t)}]$ in the inner product term of \eqref{eqn:conditional_expectation_on_smoothness_inequality}, we would need to obtain a bound on $\|\mathbb{E}_t[\mathbf{z}^{(t)}]\|^2$. To provide that, we present the bound in \eqref{eq:cor1}. With this, one can bound the inner product term as a function of $\|\nabla F(\mathbf{w}^{(t)})\|^2$ and some other constants.

The next task is to upper bound $\mathbb{E}_t[\|\mathbf{g}^{(t)}\|^2]$ as a function of $\|\nabla F(\mathbf{w}^{(t)})\|^2$ along with some constants. By definition, $\mathbf{g}^{(t)}$, which is a noisy version of $\nabla F(\mathbf{w}^{(t)})$, is a sum of three terms.    Using \eqref{eqn:aggregated_stochastic_gradient}, \textbf{A2}, and applying the identity $\|{a+b+c}\|^2 \!\le\! 3\| a\|^2\!+\!3\| b\|^2\!+\!3\| c\|^2$, we get,
\begin{equation*}
    \|{\mathbf g^{(t)}}\|^2 \le 3E^2\|{\nabla F(\mathbf w^{(t)})}\|^2 + 3\|\mathbf z^{(t)}\|^2 + 3\|\mathbf n^{(t)}\|^2,
         \label{eq:lem4-b}
\end{equation*}
where $\mathbf{n}^{(t)} = {\textstyle\sum_{i=0}^{E-1}(\textstyle\sum_k p_k(\mathbf g_k^{(t,i)} - \nabla F_k(\mathbf w_k^{(t,i)})))}$. After taking $\mathbb E_t[\cdot]$ on both sides, we bound the middle term by \eqref{eq:cor1}. Subsequently using \textbf{A4} and multiplying both sides by $\tfrac{L\eta^2}2$, we obtain \eqref{eq:step6-d}.
    Substituting \eqref{eqn:firstmoment}, \eqref{eq:cor1} and \eqref{eq:step6-d} into \eqref{eqn:conditional_expectation_on_smoothness_inequality} gives \eqref{eq:step6-e} where {\small $A := L^2\eta^3E^2(E-1)^2, B := L\eta A$}.

    Under {\small $\eta LE\le\tfrac1{10}$, $B=L\eta A\le\tfrac1{10}A$}, we get {\small $32A+24B\le34.4A\le35A$} and {\small $4A+3B\le4.3A\le5A$}. Substituting these bounds into \eqref{eq:step6-e}, we get 
    \eqref{eq:descent-final}.
    On summing over $t=0,\ldots,T-1$, rearranging the terms after using $F(\mathbf{w}^{(T)})\ge F(\mathbf{w}^*)$ and dividing by $\frac{\eta E}{2}T$, we get \eqref{eq:telescope-final}. Finally, assuming $T\ge100L^2E$ and choosing $\eta = \frac{1}{\sqrt{ET}}$, we get \eqref{eqn:thm_convergence_rate}.
\end{appendix}
\bibliographystyle{IEEEtran}
\bibliography{bibliography}
\end{document}